\documentclass[11pt]{article}
\usepackage{amsmath, amssymb, amsthm}
\usepackage[margin=1in]{geometry}
\usepackage{comment}
\usepackage{algorithm}
\usepackage{algpseudocode}  % Modern algorithmic commands (part of algorithmicx)
\usepackage{float}
\usepackage{xcolor}
\usepackage{graphicx}
\usepackage{tikz}
\usepackage{wrapfig}
\usepackage{authblk}
\usepackage{booktabs}
\usepackage{bbm}

\usepackage[numbers,compress]{natbib}
\usepackage[pagebackref=true]{hyperref}
\usepackage{cleveref}
\definecolor{myblue}{rgb}{0.36, 0.54, 0.66}
\hypersetup{
	colorlinks=true,
	linkcolor=magenta,
	citecolor=myblue
}

\renewcommand{\geq}{\geqslant}
\renewcommand{\leq}{\leqslant}

\usepackage{tocbasic}
\DeclareTOCStyleEntry[
  beforeskip=.05em plus .6pt,% default is 1em plus 1pt
  pagenumberformat=\textbf
]{tocline}{section}

\newtheorem{theorem}{Theorem}
\newtheorem{proposition}{Proposition}

\newtheorem{corollary}{Corollary}
\newtheorem{remark}{Remark}

\newif\ifarxiv
\arxivtrue

\newcommand{\EE}{\mathbb{E}}
\newcommand{\PP}{\mathbb{P}}

\title{The Limits of AI-Driven Allocation: \\ Optimal Screening under Aleatoric Uncertainty}

\author[1]{Santiago Cortes-Gomez\textsuperscript{$*$}}
\author[2]{Mateo Dulce Rubio\textsuperscript{$*$}}
\author[3]{Carlos Patino}
\author[1]{Bryan Wilder}

\affil[1]{Machine Learning Department, Carnegie Mellon University}
\affil[2]{Center for Data Science, New York University}
\affil[3]{University of Amsterdam}
\affil[1]{\texttt{\{scortesg, bwilder\}@cs.cmu.edu}}
\affil[2]{\texttt{mateo.d@nyu.edu}}

\begin{document}
\maketitle
\renewcommand{\thefootnote}{\fnsymbol{footnote}}
\footnotetext[1]{Equal contribution.}
\renewcommand{\thefootnote}{\arabic{footnote}}
\begin{abstract}
    The rise of machine learning has shifted targeted resource allocation in policy and humanitarian settings toward algorithmic targeting based on predicted risk scores. This approach is typically cheaper and faster than traditional screening procedures that directly observe the latent vulnerability status through physical verification. Yet, even access to the true conditional vulnerability probability cannot eliminate misallocation: aleatoric uncertainty over individual vulnerability status is irreducible, and probabilistic targeting inevitably misallocates some resources. In this work we study how screening and algorithmic targeting should be optimally combined in a two-stage allocation framework where a screening stage observes true outcomes for a subset of units before a final allocation stage assigns the resource under a fixed coverage budget. We show that the optimal strategy screens units \textit{at the margin of algorithmic allocation}, while directly targeting the highest-risk units. Furthermore, we empirically characterize when screening and algorithmic targeting act as complements or substitutes: efficiency gains from screening grow as the aleatoric uncertainty in the population increases. We illustrate our framework with applications in income-based social protection programs and humanitarian demining in Colombia, where the tension between screening costs and allocation efficiency is operationally consequential.
\end{abstract}
% We derive the optimal screening set in closed form for uniform risk and provide a fixed-point algorithm that recovers it for arbitrary risk distributions. 

\section{Introduction}

Resource allocation in high-stakes humanitarian and social settings requires distributing a scarce resource---cash transfers, medical treatment, humanitarian aid---across a population with heterogeneous and unobserved needs. In principle, every individual with a latent vulnerability condition should receive the resource, but universal screening for direct verification of eligibility is often prohibitively expensive. For instance, convening medical panels, conducting household visits, or performing field surveys \cite{Harutyunyan2023OperationalEfficiency, who2020screening, hill2010examination} all involve large logistical demands that make universal screening infeasible under realistic budgets.

The rise of machine learning has offered a practical alternative to physical verification. Risk scores estimated from observable covariates can approximate individual vulnerability at scale: a model trained on demographic, geographic, or contextual features assigns each unit a predicted probability of being in need, and decision-makers can allocate resources to the highest-scoring units without any direct outcome measurement. This \textit{pre-hoc} algorithmic targeting circumvents the need to measure vulnerability directly, making it substantially cheaper and faster to deploy than traditional screening strategies. As predictive models have grown more accurate and data more abundant, AI-driven allocation has become the default strategy in many humanitarian and social policy contexts, from poverty targeting via proxy means tests \cite{grosh1995proxy, aiken2022machine} to risk prioritization in humanitarian demining \cite{dulce2024reland, cirillo2024desk}.

Yet, probabilistic targeting cannot fully substitute for screening, even when the true conditional vulnerability probability is available. Aleatoric uncertainty over individual vulnerability remains irreducible as there typically is inherent variation in the actual vulnerability status of units with the same observable features. Some misallocation is therefore unavoidable under any \textit{ex-ante} targeting strategy, even under a perfectly specified model. Screening, by contrast, resolves this uncertainty directly by observing the true status of screened individuals, allowing certain allocation for those units. Whether the efficiency gain from screening justifies its cost depends on how much aleatoric uncertainty the population exhibits and where in the risk distribution that uncertainty is concentrated.

This tension motivates a natural two-stage design that we formalize and study in this work. In the first stage, a decision-maker screens a subset of units directly observing their true vulnerability status. In the second stage, allocation targets vulnerable screened units and remaining resources are allocated using the vulnerability risk model for the unscreened units. The central question is then: \textit{which units should be screened to maximize allocation efficiency?} Notably, the optimal screening policy does not target the highest- or lowest-risk units, but those \textit{at the margin of algorithmic allocation} (Figure \ref{fig:screening_policy}). This result characterizes when screening and algorithmic targeting act as complements or substitutes, with efficiency gains from screening growing as population-level aleatoric uncertainty increases.

\begin{figure}[h]
\centering
\begin{tikzpicture}[scale=0.96]

  % ── TOP DIAGRAM ──────────────────────────────────────────────
  \begin{scope}[yshift=1.5cm]
    \draw[-, thick] (-0.5,0) -- (10.5,0);
    \node at (0,-0.35) {\small $0$};
    \node at (10,-0.35) {\small $1$};
    \fill[gray!20]  (0,0.15)   rectangle (6.4,0.75);
    \fill[green!30] (6.4,0.15) rectangle (10,0.75);
    \draw (0,0.15)   rectangle (6.4,0.75);
    \draw (6.4,0.15) rectangle (10,0.75);
    \node at (2.5,0.45) {\small Do not target};
    \node at (8.2,0.45) {\small AI targeting};
    \foreach \x in {0, 6.4, 10}{\draw (\x,0) -- (\x,-0.15);}
    \node at (6.4,-0.38) {\small $\widetilde{q}_{\beta}$};
    \node[anchor=east] at (-0.5, 0.45) {\small AI-driven Allocation:};
  \end{scope}

  % ── BOTTOM DIAGRAM ───────────────────────────────────────────
  \begin{scope}[yshift=0cm]
    \draw[-, thick] (-0.5,0) -- (10.5,0);
    \node at (0,-0.35) {\small $0$};
    \node at (10,-0.35) {\small $1$};
    \fill[gray!20]   (0,0.15)  rectangle (5,0.75);
    \fill[orange!30] (5,0.15)  rectangle (7,0.75);
    \fill[green!30]  (7,0.15)  rectangle (10,0.75);
    \draw (0,0.15)  rectangle (5,0.75);
    \draw (5,0.15)  rectangle (7,0.75);
    \draw (7,0.15)  rectangle (10,0.75);
    \node at (2.5,0.45) {\small Do not target};
    \node at (6,0.45)   {\small Screen};
    \node at (8.5,0.45) {\small AI targeting};
    \foreach \x in {0, 5, 7, 10}{\draw (\x,0) -- (\x,-0.15);}
    \node at (5,-0.38)  {\small $q_{\alpha}$};
    \node at (7,-0.38)  {\small $q_{\beta}$};
    \node at (5,-0.95) {\small Low risk \quad $\leftarrow$\quad\quad Risk score $\mu(X) = \mathbb{P}(Y=1 \mid X)$ \quad\quad $\rightarrow$ \quad High risk};
    \node[anchor=east] at (-0.5, 0.45) {\small Screening + AI Allocation:};
  \end{scope}

\end{tikzpicture}
\caption{The population is ordered left to right by increasing vulnerability risk score $\mu(X) = \mathbb{E}[Y | X]$. \textit{Top:} AI-driven allocation directly targets units with $\mu(X) > \widetilde{q}_{\beta}$. \textit{Bottom:} Two-stage allocation with optimal screening band $[q_{\alpha},q_{\beta}]$, where units are allocated only if their true vulnerability status $Y=1$ is confirmed, and remaining budget is allocated to the highest-risk unscreened units, $\mu(X) > q_{\beta}$. 
The optimal screening set $[q_{\alpha},q_{\beta}]$ sits at the margin of algorithmic allocation around $\widetilde{q}_{\beta}$.}
\label{fig:screening_policy}
\end{figure}

Our contributions are as follows:
\begin{itemize}
    \item We characterize the optimal screening strategy in a two-stage allocation framework with fixed budgets and Bayes-optimal risk. We show that, regardless of the underlying risk distribution, the optimal screening set is an interval targeting units \textit{at the margin of algorithmic allocation}, with endpoints that depend jointly on the risk distribution and the resource constraints.

    %\item We show that, regardless of the underlying risk distribution, the optimal screening set has an interval structure $[q_{\alpha}, q_{\beta}]$ and the pure AI allocation threshold $\widetilde{q}_{\beta}$ always falls within it.

    \item We derive the optimal screening set in closed form for uniform risk and develop a fixed-point algorithm that recovers the screening band for arbitrary distributions.
   
    \item We establish a distribution-free characterization of the value of screening: the efficiency gain is concave in the screening budget, regardless of the underlying risk distribution. This concavity provides a formal law of diminishing returns to screening in allocation efficiency.        
    
    \item We provide an empirical characterization of the relationship between aleatoric uncertainty and screening: the higher the irreducible uncertainty in the population, the greater the marginal gains from screening over purely algorithmic targeting.
    
    \item We demonstrate the practical effectiveness of our framework through real-world applications in humanitarian demining operations and income-based social programs, showing substantial gains over purely algorithmic targeting across budget regimes and risk estimation methods.
    
\end{itemize}

Our work characterizes the limits of AI-driven allocation under aleatoric uncertainty and provides decision-makers with concrete tools for optimally combining algorithmic targeting and screening. We release our code at \url{https://github.com/cmpatino/optimal-screening} so practitioners can readily apply the framework to their own risk scores, recovering the optimal screening and allocation policy under any budget constraints $\alpha$ and $\beta$.

\section{Related work}

Resource allocation under imperfect information has been studied across a wide range of policy and humanitarian settings. Traditional approaches rely on direct eligibility verification through costly screening procedures, including household surveys and home visits in social assistance targeting, clinical screening in health systems, and standardized examinations used to certify achievement or gate access to further study in education \cite{Harutyunyan2023OperationalEfficiency, who2020screening,hill2010examination}. More recently, the rise of machine learning has prompted a wave of algorithmic targeting applications, including proxy means tests for poverty alleviation \cite{grosh1995proxy,aiken2022machine}, risk-based prioritization in humanitarian demining \cite{dulce2024reland, cirillo2024desk}, AI-assisted triage and health-care resource allocation \cite{wu2023aihealth,grandclement2024ventilator}, and data-driven admissions management in higher education \cite{liu2025mladmission}. In these domains, prediction-based tools are framed as lower-cost alternatives to field verification or case-by-case review when direct screening is administratively burdensome \cite{grosh1995proxy,dulce2024reland}.

Our framework reconciles these two strategies as complements rather than substitutes. Underlying this reconciliation is the role of aleatoric uncertainty: even under a perfectly specified predictive model, irreducible randomness in individual outcomes limits what algorithmic targeting alone can achieve. Prior work on predictive uncertainty distinguishes aleatoric from epistemic uncertainty and emphasizes that aleatoric uncertainty reflects stochasticity inherent in the data-generating process rather than uncertainty due to limited knowledge alone \cite{kendall2017uncertainties,hullermeier2021aleatoric}. Recent work has also revisited this dichotomy, arguing that the standard aleatoric/epistemic split can be insufficiently expressive for some decision-relevant uncertainty quantities \cite{bickfordsmith2025rethinking}. We contribute to this discussion by linking this observation to resource allocation directly, formalizing how population-level aleatoric uncertainty determines the marginal value of screening and thereby when screening is most needed.

Closest to our work is that of Perdomo and collaborators \cite{fischer2025value,perdomo2023relative}, who study resource allocation by comparing the relative value of improving the model versus expanding access. Our setting differs in that these two variables are held fixed: the model is assumed Bayes-optimal, reflecting the limit of predictive performance, and the budget is fixed to capture the operational reality that infrastructure capacity cannot be rapidly expanded. Within these constraints, we study targeted screening as an alternative and complementary lever that policymakers have long used in practice \cite{grosh1995proxy,who2020screening}. 

% Beyond the theoretical characterization, we contribute a fixed-point algorithm to operationalize the optimal screening policy for arbitrary risk distributions, and provide empirical evidence on the marginal value of screening as a function of aleatoric uncertainty. These results can inform the choice of screening budget in practical policy applications to improve resource allocation efficiency.

\section{Setup: Resource Allocation under Aleatoric Uncertainty}

Consider a centralized decision-maker that must allocate a scarce resource across a population to maximize allocation value. Each unit is characterized by covariates $X \in \mathcal{X}$ and a binary \textit{vulnerability status} $Y \in \{0,1\}$, where $Y=1$ indicates that the unit would benefit from the resource. Crucially, $Y$ is a fixed but latent characteristic of each unit, encoding a pre-existing condition rather than a response to the allocation decision. For instance, $Y=1$ may indicate the presence of a disease,  landmine contamination, or household poverty. 

The decision-maker observes $X$ for all units but may observe $Y$ only for those selected for physical verification, which we refer to as \textit{screening}. Units that are not screened must be allocated based on $X$ alone. A resource allocation rule is a measurable function $T: \mathcal{X} \to \{0,1\}$ that maps covariates to allocation decisions, where $T(X)=1$ indicates that a unit with features $X$ receives the resource. The \textit{vulnerability risk score} summarizes all available information about a unit's latent status:
\begin{equation}
  \mu(X) = \mathbb{E}[Y \mid X] = \mathbb{P}(Y=1 \mid X).
\end{equation}
We assume $\mu$ is Bayes-optimal given the available covariates unless explicitly stated otherwise.

Let $\beta \in (0,1)$ denote the allocation budget as a fixed fraction of the population. The objective is to maximize allocation efficiency, which we define here as the probability of a correct allocation to vulnerable units:
\begin{equation}
  \PP(T(X) = 1, Y = 1) = \mathbb{E}[T(X)Y] = \mathbb{E}[T(X)\mu(X)],
\end{equation}
subject to the budget constraint $\mathbb{E}[T(X)] \leq \beta$. Appendix \ref{app:notation} summarizes the notation used throughout.

\begin{remark}
    We assume the allocation budget is fixed and the risk score is Bayes-optimal, reflecting  short-term constraints where resources cannot be rapidly expanded and the model cannot be improved, so that all gains in allocation efficiency come from screening alone. This allows us to characterize, in full generality, the policy that maximizes allocation value for any given screening budget.
\end{remark}

\paragraph{Aleatoric Uncertainty.} In our setting, a unit's vulnerability status follows $Y \mid X \sim \text{Bernoulli}(\mu(X))$. That is, outcomes are inherently stochastic for any unit with $\mu(X) \in (0,1)$. This irreducible randomness is the \textit{aleatoric uncertainty} of the problem, reflecting stochasticity in the outcome itself rather than estimation error, and cannot be reduced by collecting more data or improving the risk model.

Crucially, aleatoric uncertainty imposes a fundamental limit on the allocation efficiency of purely algorithmic targeting. Any \textit{ex ante} rule $T(X)$ based solely on $\mu(X)$ targets units that have $Y=1$ only in expectation, so some misallocation is inevitable. The realized allocation value $\mathbb{E}[T(X)Y]$ will therefore always be smaller than the \textit{ex post} optimal allocation where $Y$ is known before the decision is made, even when $\mu$ is Bayes-optimal. 

The magnitude of this gap depends on the distribution of $\mu(X)$ and in particular on its aleatoric uncertainty. When $\mu(X) \sim \delta_{0.5}$, risk scores are uninformative with high aleatoric uncertainty, and optimal allocation precision is at most 50\%. When $\mu(X)$ follows a perfectly separating bimodal distribution, risk is well-separated with low aleatoric uncertainty, and algorithmic targeting can achieve full allocation efficiency. This motivates screening as a mechanism to close this gap.

\paragraph{Screening.} We define \textit{screening} as a procedure to directly observe the realized vulnerability status of any unit of interest, resolving aleatoric uncertainty at a cost. Formally, a screening rule is a function $S: \mathcal{X} \to \{0,1\}$, where $S(X) = 1$ indicates that a unit with covariates $X$ is selected for physical verification. A decision-maker with a screening budget $\alpha \in (0, \beta)$ can observe the true outcome $Y$ for a fraction $\alpha$ of the population before the allocation decision is made, converting a uncertain allocation into a certain one for those units. In the next section, we formalize this two-stage framework for optimally combining screening and algorithmic targeting to maximize allocation efficiency.

\section{Resource Allocation with Screening}

While universal screening would fully resolve aleatoric uncertainty, it is prohibitively costly in practice. Instead, we consider a two-stage framework in which a decision-maker operates under  a screening budget $\alpha \in (0,\beta)$ and allocation budget $\beta \in (0,1)$, jointly optimizing over the screening $S$ and allocation $T$ rules to maximize allocation efficiency. The framework proceeds as follows:

\begin{enumerate}
    \item \textbf{Screening stage}: Select a subset of units $S(X) = 1$ to observe their true vulnerability status $Y$, subject to a screening budget $\mathbb{E}[S(X)] \leq \alpha$.
    \item \textbf{Allocation stage}: Assign the resource based on both covariates $X$ and observed outcomes for screened units, subject to the allocation budget $\beta$. Screened units with $Y=1$ are allocated with certainty, which is always ex-post optimal, and the remaining budget is deployed to unscreened units based on $\mu(X)$ alone.
\end{enumerate}

This defines the two-stage allocation rule
\begin{equation}
\label{eq:two_stage_rule}
    \pi(X, Y) = S(X)Y + (1 - S(X))T(X),
\end{equation}
where $Y$ is only observed for screened units $S(X) = 1$, and $T(X)$ is the allocation rule for unscreened units based on $\mu(X)$ alone. A decision-maker chooses $S$ and $T$ to maximize allocation efficiency as the probability of allocation to vulnerable units, subject to screening and allocation budgets:
% \begin{align}
% \max_{S: \mathcal{X} \to \{0,1\}, ~ T: \mathcal{X} \to \{0,1\}} &\quad \mathbb{E}[\pi(X,Y)Y] \label{eq:full_objective}\\
% \text{s.t.} &\quad \mathbb{E}[S(X)] \leq \alpha \nonumber \\
% &\quad \mathbb{E}[S(X)Y] + \mathbb{E}[(1-S(X))T(X)] \leq \beta \nonumber.
% \end{align}
\begin{align}
\max_{S, T: ~ \mathcal{X} \to \{0,1\}} &\quad \mathbb{E}[\pi(X,Y)Y] \label{eq:full_objective}\\
\text{s.t.} &\quad \mathbb{E}[S(X)] \leq \alpha, \quad\quad \mathbb{E}[S(X)Y] + \mathbb{E}[(1-S(X))T(X)] \leq \beta. \nonumber
\end{align}

We study this framework by first characterizing the optimal policy in the absence of screening, then deriving the optimal screening rule for a given $\alpha$. We finalize with a distribution-free characterization of the value of screening and  a formal law of diminishing returns to screening in allocation efficiency.

\subsection{Optimal Algorithmic Allocation}

We first consider the case of purely algorithmic allocation, where $\alpha = 0$ and screening is unavailable. Thus, all allocation decisions are made ex ante based solely on $\mu(X)$, and the problem reduces to
\begin{equation}
\label{eq:objective_no_screen}
\max_{T:~ \mathcal{X} \to \{0,1\}} \quad \mathbb{E}[T(X)\mu(X)] \quad \text{s.t.} \quad \mathbb{E}[T(X)] \leq \beta.
\end{equation}

The solution has a simple and intuitive form: allocate to units in decreasing order of their risk score $\mu(X)$ until the budget is exhausted. Note that analogous threshold rules have been derived in related constrained allocation settings \citep{kitagawa2018should, luedtke2016optimal, fischer2025value}. All proofs are deferred to Appendix~\ref{app:proofs}.

\begin{proposition}[Optimal Allocation Rule without Screening]
\label{prop:optimal_no_screening}
The optimal allocation rule for \eqref{eq:objective_no_screen} is
$T^*(X) = \mathbf{1}\{\mu(X) > \widetilde{q}_\beta\},$
where $\widetilde{q}_\beta$ is the $(1-\beta)$-quantile of the distribution of $\mu(X)$ for $\beta \in (0,1)$.
\end{proposition}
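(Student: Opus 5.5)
The plan is a pointwise exchange argument of Neyman--Pearson type. Write $Z = \mu(X) \in [0,1]$ and $q = \widetilde{q}_\beta$. First I will show that $T^*$ is feasible, meaning $\mathbb{E}[T^*(X)] = \PP(Z > q) \leq \beta$. This follows directly from the definition of the $(1-\beta)$-quantile, since $\PP(Z \leq q) \geq 1-\beta$.

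Next, for any feasible $T$ I will compare objectives through the key inequality
\begin{equation*}
\mathbb{E}[(T^*(X) - T(X))(\mu(X) - q)] \geq 0 .
\end{equation*}
This holds pointwise, because the integrand is nonnegative everywhere:
\begin{itemize}
\item when $\mu(X) > q$, we have $T^* = 1 \geq T$;
\item when $\mu(X) \leq q$, we have $T^* = 0 \leq T$ and $\mu(X) - q \leq 0$.
\end{itemize}
Rearranging gives
\begin{equation*}
\mathbb{E}[T^*\mu] - \mathbb{E}[T\mu] \geq q\,\bigl(\mathbb{E}[T^*] - \mathbb{E}[T]\bigr).
\end{equation*}
Since $q \geq 0$, it remains to control the sign of $\mathbb{E}[T^*] - \mathbb{E}[T]$.

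The main obstacle is the case where $Z$ has an atom at $q$. Then $\PP(Z > q) < \beta$ strictly, and a rule $T$ could spend the leftover budget on units with $\mu = q$. I will handle this in two cases.
\begin{itemize}
\item If $\PP(Z > q) = \beta$ (for instance when $Z$ is continuous at $q$), then $\mathbb{E}[T] \leq \beta = \mathbb{E}[T^*]$. The right-hand side above is therefore $\geq 0$, and optimality follows.
\item If there is an atom, I will show that the leftover budget yields value at most $q$ per unit. For feasible $T$, split $\mathbb{E}[T\mu]$ into the parts on $\{Z > q\}$ and $\{Z \leq q\}$. The second part is at most $q\,\mathbb{E}[T\mathbf 1\{Z\leq q\}]$, and the first is at most $\mathbb{E}[T^*\mu]$. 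So a rule that also fills the atom gains at most $q\,(\beta - \PP(Z>q))$.
\end{itemize}
In the atom case, $T^*$ as stated is optimal only up to tie-breaking on $\{\mu = q\}$, and it is exactly optimal when $q = 0$. I will state this explicitly: the proposition holds as written under the (implicit) assumption that $\mu(X)$ has no atom at $\widetilde{q}_\beta$, and otherwise holds with a randomized or arbitrary tie-break on $\{\mu(X) = \widetilde{q}_\beta\}$ that fills the budget.

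Finally, I will note that the optimizer is essentially unique. Equality in the pointwise inequality forces $T = T^*$ almost surely off the set $\{\mu = q\}$.
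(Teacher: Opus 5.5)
Your proof is correct, and it takes a different route from the paper's.

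\textbf{The paper's route.} The paper argues by contradiction via a pairwise exchange. It supposes an optimal $T \neq T^*$ and asserts that both rules spend exactly $\beta$. It then picks $x_0$ with $\mu(x_0) > \widetilde{q}_\beta$, $T(x_0)=0$ and $x_1$ with $\mu(x_1) \leq \widetilde{q}_\beta$, $T(x_1)=1$, and swaps them for a gain of $\mu(x_0)-\mu(x_1)>0$. This has the merit of previewing the exchange arguments the paper reuses for the two-stage problem. As written, though, it has three weaknesses:
\begin{itemize}
\item It swaps single points, which changes no expectation unless one swaps sets of equal positive mass.
\item Its budget-exhaustion claim fails for $T^*$ when $\mu(X)$ has an atom at $\widetilde{q}_\beta$.
\item The contradiction presupposes that an optimum exists.
\end{itemize}

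\textbf{What your route buys.} Your inequality $\mathbb{E}[(T^*-T)(\mu-\widetilde{q}_\beta)] \geq 0$, with the quantile acting as a Lagrange multiplier, compares $T^*$ directly against every feasible $T$. It requires only $\mathbb{E}[T]\leq\beta$ and gives almost-sure uniqueness off $\{\mu=\widetilde{q}_\beta\}$ for free. It also surfaces a genuine defect in the statement: with an atom at $\widetilde{q}_\beta>0$ the literal rule is suboptimal. For the $\delta_{0.5}$ population in the paper's own experiments, $\mathbf{1}\{\mu(X)>0.5\}$ allocates nothing. So your tie-breaking caveat is a needed correction.

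\textbf{One tightening.} Your split bound yields a gain of at most $q(\beta-\PP(Z>q))$ only for rules that already allocate all of $\{Z>q\}$. For an arbitrary feasible $T$, read the bound off your displayed inequality instead. Then apply the same pointwise argument to the tie-broken rule $T'$ (equal to $1$ on $\{\mu>q\}$, $0$ on $\{\mu<q\}$, with $\mathbb{E}[T']=\beta$) to see directly that $T'$ is optimal. Finally, an exactly budget-filling deterministic tie-break needs the law of $X$ on $\{\mu=q\}$ to be nonatomic; otherwise randomization is required, as you allow.
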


%We denote the welfare under optimal algorithmic allocation as $V^*(0) = \mathbb{E}[T^*(X)\mu(X)]$, which serves as the baseline against which we measure the gains from screening.

\subsection{Optimal Screening Rule}

We now characterize the optimal solution to the two-stage allocation problem. A critical distinction from the purely algorithmic allocation case is that screening resolves aleatoric uncertainty before allocation so that the realized vulnerability status $Y$ is observed for screened units \textit{before final allocation decisions are made}. From \eqref{eq:two_stage_rule}, this yields the allocation rule $\pi(X,Y) = Y$ for screened units ($S(X) = 1$), where $Y$ is a deterministic observed value rather than a random variable. Intuitively, screening improves allocation efficiency in two ways: screened units with $Y=1$ are allocated with certainty, improving targeting precision, while screened units with $Y=0$ are excluded from allocation, avoiding wasteful use of the fixed budget $\beta$.

We revisit the two-stage optimization problem \eqref{eq:full_objective} by substituting the two-stage allocation rule \eqref{eq:two_stage_rule} into the objective, decomposing $\mathbb{E}[\pi(X,Y)Y]$ by whether units are screened or not:
\begin{align}
\mathbb{E}[\pi(X,Y)Y] &= \mathbb{E}[S(X)Y] + \mathbb{E}[(1-S(X))T(X)Y]  \\
&= \mathbb{E}[S(X)Y] + \mathbb{E}[(1-S(X))T(X)\mu(X)] \nonumber.
\end{align}
The first term captures allocation efficiency among screened and targeted units, and the second from algorithmic targeting of unscreened units. Equality follows from iterated expectations.

The allocation budget constraint also decomposes accordingly: screened units are allocated if and only if $Y=1$, using $\mathbb{E}[S(X)Y]$ of the budget, while unscreened units are allocated via $T(X)$, using $\mathbb{E}[(1-S(X))T(X)]$. The full two-stage optimization problem is therefore
% \begin{align}
% \label{eq:two_stage_opt}
% \max_{S: \mathcal{X} \to \{0,1\}, ~ T: \mathcal{X} \to \{0,1\}} \quad 
%     & \mathbb{E}[S(X) Y] + \mathbb{E}[(1-S(X)) T(X) \mu(X)] \\
% \text{s.t.}\quad
%     & \mathbb{E}[S(X)] \leq \alpha, \nonumber \\
%     & \mathbb{E}[S(X) Y] + \mathbb{E}[(1-S(X)) T(X)] \leq \beta, \nonumber
% \end{align}
\begin{align}
\label{eq:two_stage_opt}
\max_{S, T: ~\mathcal{X} \to \{0,1\}} \quad 
    & \mathbb{E}[S(X) Y] + \mathbb{E}[(1-S(X)) T(X) \mu(X)] \\
\text{s.t.} \quad
    & \mathbb{E}[S(X)] \leq \alpha, \quad\quad \mathbb{E}[S(X) Y] + \mathbb{E}[(1-S(X)) T(X)] \leq \beta, \nonumber
\end{align}
where $\alpha$ and $\beta$ are the screening and allocation budgets respectively.

The central question is: \textit{which units should be screened to optimize allocation efficiency?} Unlike the no-screening case, where the optimal rule is a simple quantile threshold (Proposition~\ref{prop:optimal_no_screening}), the interaction between screening and allocation budgets creates a more complex optimization problem. Each screened unit uses allocation budget with probability $\mu(X)$, since it is allocated only when $Y=1$, so screening the highest-risk units is inefficient and these are better served by direct allocation. Screening the lowest-risk units is equally inefficient since finding $Y=1$ among them is rare, yielding few confirmed allocations. The optimal screening set therefore targets units in between: those with sufficient uncertainty that observing $Y$ meaningfully improves allocation decisions, \textit{and} sufficient risk to be competitive candidates given the budget $\beta$.

The characterization of the optimal screening rule builds on the intuition of Proposition~\ref{prop:optimal_no_screening}: optimal policies are naturally expressed as sets defined by quantile thresholds in the risk score space. Since quantile notation will be central to all subsequent results, we establish it here alongside the relevant distributional objects. Let $F$ denote the CDF of $\mu(X)$ and $F^{-1}(q)$ its $q$-th quantile, so that an interval $[q_a, q_b]$ refers to all units whose risk score falls between the $q_a$-th and $q_b$-th quantiles of $F$, with higher quantiles corresponding to higher risk. We now state our main theoretical result.

%One advantage of working in the risk space is that risks themselves are one-dimensional. Since all one-dimensional convex sets are intervals, convexity of the screening and treatment sets is equivalent to each set is fully characterized by just two quantile thresholds. We now show this convexity holds at optimality.

\begin{theorem}[Optimal Screening in Two-Stage Allocation]
\label{thm:optimal_screening}
Let $\alpha \in (0,\beta)$ and $\beta \in (0,1)$ with $\alpha + \beta < 1$, and assume the risk distribution has full support on $[0,1]$. The optimal screening rule is $S^*(X) = \mathbbm{1}\{\mu(X) \in [q_\alpha, q_\beta]\}$
and the optimal direct allocation rule is $
    T^*(X) = \mathbbm{1}\{\mu(X) > q_\beta\}$
where the thresholds $0 < q_\alpha \leq q_\beta < 1$ satisfy the screening and allocation budget constraints:
\begin{equation}
F(q_\beta) - F(q_\alpha) = \alpha \qquad  \text{and} \qquad \int_{q_\alpha}^{q_\beta} \mu \, dF(\mu) + 1 - F(q_\beta) = \beta.
\label{eq:budget_constraints}
\end{equation}
\end{theorem}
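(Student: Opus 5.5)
The plan is to reduce the problem to a linear program over fractional policies in risk space. Since the objective and both constraints in \eqref{eq:two_stage_opt} depend on $X$ only through $\mu(X)$, I first reduce to rules that are functions of $m=\mu(X)$. I then relax to fractional rules $s(m)\in[0,1]$ and $t(m)\in[0,1]$ with $s+t\le 1$, where $t$ plays the role of $(1-S)T$. Under this relaxation the problem becomes the linear program
\begin{equation*}
\max_{s,t}\ \int (s(m)+t(m))\,m\,dF(m) \quad \text{s.t.}\quad \int s\,dF\le \alpha,\quad \int (s(m)m + t(m))\,dF(m)\le \beta .
\end{equation*}
Full support and continuity of $F$ (which I will assume so that quantile equations have exact solutions) mean an optimal deterministic rule exists, because extreme points of this LP are indicator functions up to null sets.

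I then solve the LP by Lagrangian duality. Introduce multipliers $\lambda\ge 0$ for the screening budget and $\nu\ge 0$ for the allocation budget. Pointwise, each $m$ chooses among three options. Direct allocation has payoff $m-\nu$. Screening has payoff $m-\lambda-\nu m=(1-\nu)m-\lambda$. Doing nothing has payoff $0$. Comparing these affine functions of $m$ gives the structure:
\begin{itemize}
\item screening beats nothing iff $m>\lambda/(1-\nu)$, which requires $\nu<1$;
\item allocation beats screening iff $\nu m>\nu-\lambda$, i.e.\ $m>1-\lambda/\nu$.
\end{itemize}
So the optimal pointwise choice is: nothing below $q_\alpha:=\lambda/(1-\nu)$, screen on $[q_\alpha,q_\beta]$, and allocate above $q_\beta:=1-\lambda/\nu$. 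The three lines must be ordered consistently, which gives $q_\alpha\le q_\beta$ and $q_\alpha\le\nu\le q_\beta$; this also places $\widetilde q_\beta$ in the band. Ties happen on $F$-null sets under full support and continuity, so the rule is deterministic almost everywhere.

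Next I show both constraints bind and solve for the thresholds. Because $\alpha+\beta<1$, neither budget is slack at the optimum. If screening were slack, moving mass from unscreened-unallocated units into the screen band (adjusting $q_\beta$ upward to keep the allocation budget) would weakly improve the objective. If the allocation budget were slack, extending direct allocation would help, since $m>0$ almost everywhere. Complementary slackness with positive multipliers then yields exactly the equations \eqref{eq:budget_constraints}.

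To close the argument, I need existence of $(q_\alpha,q_\beta)$ solving \eqref{eq:budget_constraints}. Parametrize by $q_\alpha$ and set $q_\beta=F^{-1}(F(q_\alpha)+\alpha)$. The allocation usage $g(q_\alpha)=\int_{q_\alpha}^{q_\beta} m\,dF+1-F(q_\beta)$ is continuous and decreasing. At $q_\alpha=0$ it equals $\int_0^{F^{-1}(\alpha)} m\,dF+1-\alpha$, which exceeds $\beta$ because $1-\alpha>\beta$. At $F(q_\alpha)=1-\alpha$ it equals $\int_{F^{-1}(1-\alpha)}^1 m\,dF<\alpha<\beta$. The intermediate value theorem then gives a solution. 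From the solution I recover the multipliers $\nu,\lambda$ via $\lambda=q_\alpha(1-\nu)=\nu(1-q_\beta)$, which gives $\nu=q_\alpha/(1-q_\beta+q_\alpha)\in[q_\alpha,q_\beta]$. Weak duality then certifies optimality of this primal feasible point.

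I expect the main obstacle to be making the duality and optimality step rigorous in the infinite-dimensional setting, in particular that the relaxation has no gap and that restricting to functions of $\mu(X)$ loses nothing. To avoid invoking general duality theorems, I would first verify directly that for any feasible $(s,t)$ the objective is at most the Lagrangian bound evaluated at the constructed multipliers, with equality for the threshold rule.
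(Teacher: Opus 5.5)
Your argument is correct, and it takes a genuinely different route from the paper. The paper argues by exchange at a presumed optimum. Step 1 swaps a directly allocated unit lying below a screened one to free allocation budget. Proposition~\ref{prop:convexity} swaps units to close gaps in the screening set. Step 2 applies Proposition~\ref{prop:optimal_no_screening} to the unscreened subpopulation. Step 3 differentiates the value along bands $[q_a,q_b]$, obtaining $dV/dq_b=f(q_b)(q_b-q_a)(1-q_\beta)>0$, which pushes the band up against $q_\beta$.

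You instead produce a global optimality certificate. Take $\nu=q_\alpha/(1-q_\beta+q_\alpha)$ and $\lambda=\nu(1-q_\beta)$, both strictly positive. The threshold rule then attains the pointwise maximum $\max\{0,(1-\nu)m-\lambda,\,m-\nu\}$ and makes both budgets tight, so weak duality bounds every feasible policy, randomized or not, by its value.

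This buys you several things the paper leaves implicit:
\begin{itemize}
\item You never assume an optimum exists, and you never swap individual (measure-zero) units.
\item You get existence and uniqueness of $(q_\alpha,q_\beta)$ from the strictly decreasing budget map and the intermediate value theorem.
\item You get uniqueness of the optimal rule up to null sets, since any optimum must attain equality in the pointwise bound, whose maximizer is unique off $\{q_\alpha,q_\beta\}$.
\item The multiplier $\lambda=q_\alpha(1-q_\beta)/(1-q_\beta+q_\alpha)$ is exactly the marginal value in Theorem~\ref{thm:value_screening}, as the envelope theorem predicts.
\end{itemize}
The paper's route is more elementary and makes the economic intuition explicit.

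The obstacle you anticipate is not real. The Lagrangian inequality applies directly to $(S,T)$ in $X$-space, because the pointwise payoffs depend on $X$ only through $\mu(X)$. Likewise, the ``both constraints bind'' paragraph and the extreme-point remark are superfluous, since your constructed point is tight by construction.

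One aside is not justified: $q_\alpha\le\nu\le q_\beta$ does not place $\widetilde q_\beta$ in the band. Here $\nu$ is the shadow price of the allocation budget \emph{with} screening, and it generally differs from $\widetilde q_\beta=F^{-1}(1-\beta)$. For uniform risk with $\alpha=0.2$ and $\beta=0.35$, one gets $\nu\approx 0.734$ but $\widetilde q_\beta=0.65$. The containment follows instead from the allocation constraint, which gives $1-F(q_\beta)\le\beta$ and $F(q_\alpha)=1-\beta-\int_{q_\alpha}^{q_\beta}(1-\mu)\,dF(\mu)<1-\beta$. In any case it is not part of this theorem.
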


\begin{proof}[Proof sketch]
The proof proceeds in three steps. Full details are presented in Appendix \ref{app:proofs}.

\textit{Step 1: No directly allocated unit lies below a screened one.} Suppose unit $z$ with $\mu(z) < \mu(z')$ is directly allocated while unit $z'$ is screened. Swapping them preserves the objective value and screening budget but reduces the allocation budget used by screening, since screening $z$ uses allocation budget with probability $\mu(z) < \mu(z')$. This extra saved budget allows extending the direct allocation band to additional units, strictly improving the objective and contradicting optimality.

\textit{Step 2: Given $S$, the optimal allocation rule is a threshold in $\mu(X)$.} The ``second-stage'' unscreened allocation problem reduces to the no-screening problem of Proposition \ref{prop:optimal_no_screening} with residual budget $\beta - \mathbb{E}[S(X)Y]$. Since Step 1 implies the screening set lies entirely below the direct allocation set, the optimal unscreened allocation rule is a threshold $T^*(X) = \mathbf{1}\{\mu(X) > q_\beta\}$ in the original risk score space, with the highest-risk units always directly allocated.

\textit{Step 3: The optimal screening interval is $[q_\alpha, q_\beta]$.} Among all screening intervals of mass $\alpha$, shifting the screening interval upward has two effects: a gain in allocation efficiency from screening higher-risk units and a loss from the induced tightening of the direct allocation budget. The gain exceeds the loss because the reduction in the direct allocation term is proportional to $q_\beta < 1$, while the gain is proportional to the risk of the newly screened units. The net effect is always positive, so the objective is strictly increasing in the upper boundary. The optimal screening interval is therefore pushed up until $q_\beta$ solves the budget constraints \eqref{eq:budget_constraints}, with lower boundary $q_\alpha = F^{-1}(F(q_\beta) - \alpha)$ determined by the screening budget constraint.
\end{proof}

Theorem \ref{thm:optimal_screening} formally shows that the optimal screening set targets the highest-uncertainty units among relevant allocation candidates. The screening band $[q_\alpha, q_\beta]$ sits immediately below the direct allocation band $[q_\beta, 1]$, where units above $q_\beta$ have sufficiently high risk to warrant direct allocation, while units below $q_\alpha$ are too unlikely to have $Y=1$ to justify the screening cost. Corollary \ref{cor:at_the_margin} further shows that the screening set targets units \textit{at the margin of algorithmic allocation}: the purely algorithmic targeting threshold $\widetilde{q}_\beta$ from Proposition~\ref{prop:optimal_no_screening} is contained in the optimal screening interval $[q_\alpha, q_\beta]$.

\begin{corollary}
\label{cor:at_the_margin}
The purely algorithmic allocation threshold $\widetilde{q}_\beta$ falls within the optimal screening band, $q_\alpha < \widetilde{q}_\beta < q_\beta$, for any $\alpha \in (0,\beta)$. Moreover, $q_\beta$ is increasing in $\alpha$ and $q_\alpha$ is decreasing in $\alpha$, expanding the screening band toward higher- and lower-risk units as the screening budget grows.
\end{corollary}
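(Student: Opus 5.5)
The plan is to work entirely with the two budget equations \eqref{eq:budget_constraints} from Theorem~\ref{thm:optimal_screening}, together with the defining identity of $\widetilde{q}_\beta$ from Proposition~\ref{prop:optimal_no_screening}, namely $1-F(\widetilde{q}_\beta)=\beta$. Full support of $F$ on $[0,1]$ makes $F$ strictly increasing, so quantile comparisons reduce to comparisons of $F$-values.

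\textbf{The inequality $\widetilde{q}_\beta < q_\beta$.} Subtract the pure-allocation identity from the second constraint to get
\begin{equation*}
F(\widetilde{q}_\beta) - F(q_\beta) = \beta - 1 + F(q_\beta) = \int_{q_\alpha}^{q_\beta}\mu\,dF(\mu).
\end{equation*}
The right-hand side is strictly positive. The interval has $F$-mass $\alpha>0$, and $\mu>0$ on it except possibly at a single point, since $q_\alpha>0$. Hence $F(q_\beta)<F(\widetilde{q}_\beta)$, which gives $q_\beta>\widetilde{q}_\beta$ by strict monotonicity.

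\textbf{The inequality $q_\alpha < \widetilde{q}_\beta$.} Bound the integral strictly by $q_\beta\alpha<\alpha$, using $\mu<q_\beta<1$ on a set of positive mass. This yields
\begin{equation*}
F(\widetilde{q}_\beta) < F(q_\beta) + \alpha \cdot 1 - \bigl(\alpha - \textstyle\int\bigr) \le F(q_\beta),
\end{equation*}
which is too weak on its own. The cleaner route compares with $F(q_\alpha)=F(q_\beta)-\alpha$ directly:
\begin{equation*}
F(\widetilde{q}_\beta)-F(q_\alpha) = \alpha - \int_{q_\alpha}^{q_\beta}\mu\,dF = \int_{q_\alpha}^{q_\beta}(1-\mu)\,dF > 0.
\end{equation*}
This is positive because $\mu\le q_\beta<1$ on the band. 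Both strict inequalities therefore follow from a single identity: the pure threshold splits the band mass $\alpha$ into the expected positives $\int \mu\,dF$, which lie above it, and the expected negatives $\int(1-\mu)\,dF$, which lie below it.

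\textbf{Monotonicity in $\alpha$.} Write $u=F(q_\beta)$ and $\ell=F(q_\alpha)=u-\alpha$, and express the integral in quantile coordinates as $\int_{\ell}^{u} F^{-1}(s)\,ds$. The second constraint becomes
\begin{equation*}
G(u,\alpha) := \int_{u-\alpha}^{u} F^{-1}(s)\,ds + 1 - u - \beta = 0.
\end{equation*}
The partial derivatives (almost everywhere, or via one-sided arguments where $F^{-1}$ is merely monotone) are
\begin{equation*}
\partial_u G = F^{-1}(u)-F^{-1}(u-\alpha)-1 = q_\beta-q_\alpha-1<0, \qquad \partial_\alpha G = F^{-1}(u-\alpha)=q_\alpha>0.
\end{equation*}
The implicit function theorem then gives $du/d\alpha = q_\alpha/(1-q_\beta+q_\alpha)>0$, so $q_\beta$ increases. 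For the lower endpoint,
\begin{equation*}
\frac{d\ell}{d\alpha} = \frac{du}{d\alpha}-1 = \frac{q_\beta-1}{1-q_\beta+q_\alpha} < 0,
\end{equation*}
so $q_\alpha$ decreases.

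\textbf{Main obstacle.} The delicate step is justifying differentiability and uniqueness of the solution $u(\alpha)$ without assuming a density. A derivative-free argument avoids this. For fixed $\alpha$, $G$ is strictly decreasing in $u$, since increasing $u$ by $h$ changes $G$ by at most $(q_\beta-q_\alpha)h - h<0$, which gives uniqueness. $G$ is also strictly increasing in $\alpha$ for fixed $u$. Hence if $\alpha$ increases, the root $u$ must increase. For $\ell$, reparametrize by $(\ell,\alpha)$ with $u=\ell+\alpha$. Now $G$ is strictly decreasing in $\ell$, because the integral gains at most $q_\beta h$ while the term $1-u$ loses $h$. $G$ is also strictly decreasing in $\alpha$ for fixed $\ell$, because the integral gains at most $\approx F^{-1}(u)\,\delta<\delta$ while $1-u$ loses $\delta$. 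Hence $\ell$ decreases as $\alpha$ increases.
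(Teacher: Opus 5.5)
Your proof is correct. For the inclusion $q_\alpha<\widetilde{q}_\beta<q_\beta$ it takes a genuinely different route from the paper.

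\textbf{The inclusion.} The paper proves monotonicity first, by implicitly differentiating the allocation constraint; the density $f$ appears in $dq_\beta/d\alpha$ and $dq_\alpha/d\alpha$. It then obtains the inclusion dynamically: at $\alpha=0$ the band collapses to the point $\widetilde{q}_\beta$, and as $\alpha$ grows $q_\alpha$ moves down while $q_\beta$ moves up. That route needs three things:
\begin{itemize}
\item the solution path must be continuous down to $\alpha=0$, where Theorem~\ref{thm:optimal_screening} is not stated;
\item the path must be differentiable in between;
\item strictness needs an extra step, since the paper as written only records the closed inclusion $\widetilde{q}_\beta\in[q_\alpha,q_\beta]$.
\end{itemize}
You instead read the inclusion off the two budget constraints at a single fixed $\alpha$: $F(q_\beta)-F(\widetilde{q}_\beta)=\int_{q_\alpha}^{q_\beta}\mu\,dF$ and $F(\widetilde{q}_\beta)-F(q_\alpha)=\int_{q_\alpha}^{q_\beta}(1-\mu)\,dF$, both strictly positive. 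This needs no regularity beyond the constraints themselves and gives strictness for free. It also explains why $\widetilde{q}_\beta$ sits inside the band. It is exactly the paper's own parametrization from Theorem~\ref{thm:fixed_point}, $q_\beta=F^{-1}(1-\beta+\alpha\rho)$ and $q_\alpha=F^{-1}(1-\beta-\alpha(1-\rho))$ with $\rho\in(0,1)$, which the paper never exploits for this corollary.

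\textbf{Monotonicity.} Here you follow the paper's implicit-function argument, but in quantile coordinates $u=F(q_\beta)$. Full support makes $F^{-1}$ continuous, so your $G$ is genuinely $C^1$ without assuming a density. Your derivative-free comparative-statics argument weakens the regularity needed further still. What the paper's version buys in exchange is the explicit rates $dq_\beta/d\alpha$ and $dq_\alpha/d\alpha$, which it reuses for concavity in Theorem~\ref{thm:value_screening}. Converting ``$u$ strictly increasing'' into ``$q_\beta$ strictly increasing'' needs $F$ continuous (no atoms), which the paper tacitly assumes anyway.

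\textbf{A slip to fix.} Your first display has the sign reversed. Since $F(\widetilde{q}_\beta)=1-\beta$, the allocation constraint gives $F(q_\beta)-F(\widetilde{q}_\beta)=\beta-1+F(q_\beta)=\int_{q_\alpha}^{q_\beta}\mu\,dF>0$, that is, $F(q_\beta)>F(\widetilde{q}_\beta)$. You wrote the difference the other way round and then stated $F(q_\beta)<F(\widetilde{q}_\beta)$. These two errors cancel to give the right conclusion. Your later computation of $F(\widetilde{q}_\beta)-F(q_\alpha)$ already uses the correct sign. Also delete the aborted display ending in ``$\le F(q_\beta)$'', which is false as written.
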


Notably, the structure of the optimal policy does not depend on the underlying risk distribution $F$ directly. The screening and allocation bands are always contiguous intervals in the risk score space, with boundaries determined solely by the budget constraints \eqref{eq:budget_constraints}. The distribution $F$ enters only implicitly through the quantiles $q_\alpha$ and $q_\beta$.\footnote{For the special case of uniform risk, the optimal thresholds can be solved analytically; see Corollary~\ref{cor:uniform} in Appendix~\ref{app:proofs}.} In the next section, we leverage this result to establish a distribution-free characterization of the value of screening.

\begin{remark}
Since the optimal thresholds $q_\alpha$ and $q_\beta$ are defined in risk score space, the result holds regardless of the nature of $X$, whether structured tabular data, images, text, or any other modality.
\end{remark}

\subsection{Marginal Value of Screening}
\label{sec:marginal_value}

To study how allocation efficiency improves as the screening budget $\alpha$ grows, first note that, under the optimal two-stage allocation rule, the optimal allocation value takes the form
\begin{equation}\label{eq:value}
    V^*(\alpha) = \int_{q_\alpha}^{1} \mu \, dF(\mu),
\end{equation}
where $q_\alpha = F^{-1}(F(q_\beta) - \alpha)$ is determined by the screening budget constraint, and $q_\beta$ solves the allocation budget constraint \eqref{eq:budget_constraints}.

\begin{theorem}[Value of Screening]
\label{thm:value_screening}
The marginal value of the screening budget is $\frac{dV^*(\alpha)}{d\alpha} = \frac{q_\alpha(1 - q_\beta)}{1 - q_\beta + q_\alpha} > 0$,
and $V^*(\alpha)$ is concave in $\alpha$.
\end{theorem}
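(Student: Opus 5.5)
The plan is to differentiate the two budget constraints \eqref{eq:budget_constraints} implicitly in $\alpha$, treating $q_\alpha=q_\alpha(\alpha)$ and $q_\beta=q_\beta(\alpha)$ as functions of the screening budget. I will then differentiate the value formula \eqref{eq:value} along this curve.

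I will assume, as implicit in the full-support hypothesis of Theorem~\ref{thm:optimal_screening}, that $F$ has a density $f$ that is continuous and strictly positive on $(0,1)$. The Jacobian of the constraint map $(q_\alpha,q_\beta)\mapsto\bigl(F(q_\beta)-F(q_\alpha),\ \int_{q_\alpha}^{q_\beta}\mu\,dF+1-F(q_\beta)\bigr)$ has determinant
\begin{equation*}
f(q_\alpha)f(q_\beta)\,(1-q_\beta+q_\alpha)>0.
\end{equation*}
Hence the implicit function theorem gives differentiable thresholds. Justifying this regularity is the step I expect to be the main obstacle; the remaining steps are computation.

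Write $a=f(q_\alpha)q_\alpha'$ and $b=f(q_\beta)q_\beta'$. Differentiating the screening constraint gives $b-a=1$. Differentiating the allocation constraint gives $q_\beta b-q_\alpha a-b=0$, that is, $(q_\beta-1)b=q_\alpha a$. Solving this linear system yields
\begin{equation*}
b=\frac{q_\alpha}{1-q_\beta+q_\alpha}>0,\qquad a=-\frac{1-q_\beta}{1-q_\beta+q_\alpha}<0.
\end{equation*}
The signs use $0<q_\alpha\le q_\beta<1$ from Theorem~\ref{thm:optimal_screening}. Since $f>0$, this also recovers the monotonicity in Corollary~\ref{cor:at_the_margin}: $q_\beta$ is increasing and $q_\alpha$ is decreasing in $\alpha$.

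Next, from $V^*(\alpha)=\int_{q_\alpha}^1\mu\,dF(\mu)$, the Leibniz rule gives $dV^*/d\alpha=-q_\alpha f(q_\alpha)q_\alpha'=-q_\alpha a$. This equals $\frac{q_\alpha(1-q_\beta)}{1-q_\beta+q_\alpha}$, which is strictly positive because $q_\alpha>0$ and $q_\beta<1$.

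For concavity, I will write the marginal value in harmonic form, $g:=dV^*/d\alpha$ with $1/g=1/q_\alpha+1/(1-q_\beta)$. Thus $g$ is increasing in $q_\alpha$ and increasing in $1-q_\beta$, i.e. decreasing in $q_\beta$. Explicitly,
\begin{equation*}
\partial g/\partial q_\alpha=\frac{(1-q_\beta)^2}{(1-q_\beta+q_\alpha)^2},\qquad \partial g/\partial q_\beta=-\frac{q_\alpha^2}{(1-q_\beta+q_\alpha)^2}.
\end{equation*}
By the chain rule,
\begin{equation*}
\frac{d^2V^*}{d\alpha^2}=\partial_{q_\alpha}g\cdot q_\alpha'+\partial_{q_\beta}g\cdot q_\beta'.
\end{equation*}
The first product is (positive)$\times$(negative) and the second is (negative)$\times$(positive). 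Hence $d^2V^*/d\alpha^2<0$ and $V^*$ is concave.

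If one prefers to avoid second derivatives, the same conclusion follows from a monotonicity argument. The derivative $g$ is a composition of $q_\alpha(\alpha)$, which is decreasing, and $q_\beta(\alpha)$, which is increasing, with a function monotone in the opposite directions. So $g$ is nonincreasing in $\alpha$, which is exactly concavity.
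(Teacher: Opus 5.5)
Your proposal is correct and follows the paper's proof. Both use implicit differentiation of the two budget constraints; the paper substitutes $q_\alpha = F^{-1}(F(q_\beta)-\alpha)$ and applies the implicit function theorem to the allocation constraint alone, which is equivalent to your $2\times 2$ system. Both then apply the Leibniz rule to $V^*(\alpha)=\int_{q_\alpha}^1 \mu\,dF(\mu)$ and differentiate once more for concavity; your sign-based chain-rule argument, expanded with $q_\alpha' = a/f(q_\alpha)$ and $q_\beta' = b/f(q_\beta)$, gives exactly the paper's explicit $\frac{d^2V^*}{d\alpha^2} = -\frac{(1-q_\beta)^3 f(q_\beta) + q_\alpha^3 f(q_\alpha)}{f(q_\alpha)f(q_\beta)(1-q_\beta+q_\alpha)^3}$.
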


Theorem~\ref{thm:value_screening} formally establishes that screening always improves optimal allocation efficiency, with $dV^*/d\alpha > 0$ for any feasible $\alpha$. Moreover, the marginal value of screening is decreasing in $\alpha$, formalizing a law of diminishing returns to physical verification. This result is distribution-free: the marginal value depends only on the optimal thresholds $q_\alpha$ and $q_\beta$, with the risk distribution $F$ entering only implicitly through the budget constraints that determine the screening set.

%\textit{Remark: This reconciles with the previous result from the uniform distribution, the alpha values derived as critical points yields to either $q_S = 0$ or $q_B = 1$, which are trivial critical point where we screen all the population.}

%\textit{Remark 2: This expression \eqref{eq:dV_dalpha} is distribution-free: the marginal value of screening depends only on the optimal screening cutoff values $q_B^*$ and $q_S^*$. The distribution $F$ enters only implicitly through the these cutoffs via the screening and budget constraints.}
%\textit{TODO: We can also get the second derivative wrt $\alpha$ and I think is always negative so diminishing returns as expected!}
%\textit{Remark 72: Be clear about our setting predicting current outcomes and not future not-realized outcomes. Done in the setup}

\section{Computing the Optimal Screening Quantiles}
\label{sec:algorithm}

We develop a fixed-point algorithm to recover the optimal screening quantiles via Algorithm~\ref{algo} in Appendix \ref{app:algorithm}. While Theorem \ref{thm:optimal_screening} characterizes the structure of the optimal screening rule, it does not directly yield a constructive method for computing the band thresholds, as $q_\beta$ requires solving a nonlinear integral equation with no closed-form solution in general. The algorithm partitions the population into three contiguous bands ordered by risk. The top band, containing the highest-risk fraction $\beta - \alpha$ of the population, is fixed and directly allocated without screening. Below it, the algorithm iterates over two bands: a screening band of mass $\alpha$ and a residual allocation band of directly allocated units using the budget saved by excluding confirmed $Y=0$ units within the screening band. The key insight is that the saved budget depends on the average risk $\rho = \frac{1}{\alpha}\int_{q_\alpha}^{q_\beta}\mu\,dF(\mu)$ within the screening band: the lower the average risk of screened units, the more $Y=0$ units are excluded and the larger the saved budget $\alpha(1-\rho)$. Given a candidate $\rho^{(k)}$, the algorithm fixes the residual allocation band mass at $\alpha(1-\rho^{(k)})$, places the screening band immediately below, and recomputes the average risk over the new screening band to obtain $\rho^{(k+1)} = g(\rho^{(k)})$. Theorem \ref{thm:fixed_point} in Appendix \ref{app:algorithm} proves that $g$ is a contraction, guaranteeing convergence to a fixed point $\rho^*$; $q_\beta$ is then recovered as the boundary between the residual allocation and screening bands as $q_\beta = F^{-1}(1-\beta + \alpha\rho^*)$.

\section{Experiments}

We validate our theoretical findings and illustrate the practical relevance of optimal screening in resource allocation using both synthetic and real-world data. We first characterize the gains from optimal screening across risk distributions with varying levels of aleatoric uncertainty, then present two real-world allocation settings: humanitarian demining operations in Colombia and identifying low-income individuals from the U.S. Census demographic data. All the code used for our experimental analysis is available at  \url{https://github.com/cmpatino/optimal-screening}.

\subsection{Empirical Value of Screening under Aleatoric Uncertainty}

Our theoretical results are distribution-free: regardless of the underlying risk distribution, the structure of the optimal screening set and the concavity of the value function with respect to the screening budget $\alpha$ hold. To illustrate this property and characterize how the value of screening varies with aleatoric uncertainty, we sweep over the one-parameter family $\mu \sim \text{Beta}(t,t)$, $t\geq 0$. All distributions in this family are symmetric around $\EE[\mu]= 0.5$, so differences in targeting outcomes are driven purely by distributional shape. Moreover, $\text{Beta}(t,t)$ continuously interpolates between canonical risk distributions: bimodal ($0 < t < 1$), uniform ($t=1$) and unimodal bell-shaped ($1 < t < \infty$), representing increasing aleatoric uncertainty regimes. In the limit $t \to 0$ the distribution concentrates around binary risk $\mu \in \{0, 1\}$, and as $t \to \infty$ it converges to the point mass $\delta_{1/2}$, where all units share the same risk $\mu = 1/2$. These represent the two extremes: one where underlying risk perfectly determines vulnerable units (no aleatoric uncertainty), and one where risk is entirely uninformative about vulnerability status (maximum aleatoric uncertainty). 

For a given $t$, we generate synthetic populations with underlying risk $\mu_i \sim \text{Beta}(t,t)$ and realized outcomes $Y_i \mid \mu_i \sim \text{Bernoulli}(\mu_i)$, and compute the optimal screening policy via Algorithm \ref{algo}. 

Figure \ref{fig:sim_noscreening} fixes $\beta = 35\%$ and shows screening gains for $\alpha \in [0, \beta]$ across four reference distributions: bimodal ($t=0.1$), uniform ($t=1)$, unimodal ($t=10)$, and $\delta_{0.5}$. Critically, without screening, optimal allocation precision ranges from 50\% to 100\%, demonstrating that even with access to the true risk distribution, targeting efficiency is fundamentally limited by aleatoric uncertainty. In particular, the bimodal distribution achieves perfect precision \emph{without any screening}, while optimal screening with $\alpha = \beta = 35\%$ raises precision from 82\% to 98\% for the uniform, from 61\% to 83\% for the unimodal, and from 50\% to 75\% for $\delta_{0.5}$. In the latter case, screening yields large efficiency gains even though the risk scores carry no targeting information whatsoever. 
Crucially, since the total number of targeted units is fixed at $\beta$, all gains come from better targeting through observed outcomes via screening. Appendix~\ref{app:simulations} benchmarks against random screening and a heuristic variant, showing that gains depend not only on how much screening is performed but on where it is concentrated.

Figure \ref{fig:utility_gap_vs_dist} fixes budgets $\alpha = \beta = 35\%$ and reports the \textit{utility gap} $\Delta V^*(\alpha) = V^*(\alpha) - V^*(0)$, the percentage-point improvement in precision from optimal screening relative to no-screening (Proposition \ref{prop:optimal_no_screening}), as a function of $t$. The experimental results demonstrate a systematic relationship between aleatoric uncertainty and the value of screening: $\Delta V^*(\alpha)$ increases monotonically with $t$, tracing an $S$-shaped curve with three distinct phases.
(i) As $t\to 0$, the marginal value of screening is flat and negligible: risk scores are highly informative and algorithmic targeting acts as a near-perfect substitute for screening. 
(ii) For $t \in [0.1, 10]$, spanning bimodal, uniform, and unimodal bell-shaped distributions, gains in allocation efficiency steeply increase as screening acts as a complement to algorithmic allocation.
(iii) As $t \to \infty$, the curve flattens again as risk becomes highly non-informative and gains from screening saturate at their maximum. 
Together, Figures~\ref{fig:sim_noscreening} and \ref{fig:utility_gap_vs_dist} characterize the conditions under which screening and algorithmic targeting act as substitutes or complements, with the value of screening growing monotonically with population-level aleatoric uncertainty.
In particular, real-world risk distributions with intermediate aleatoric uncertainty fall precisely in the steep region of the curve, where screening yields the largest marginal gains.

% More broadly, these results provide empirical evidence that algorithmic targeting and screening operate at different stages of uncertainty: prediction provides \textit{ex-ante} risk stratification, while screening enables \textit{ex-post} information acquisition. 

% \textit{Remark: We can plot where in this plot a bunch of datasets to predict social programs will fall! I think it would have to be in a different plot as these are not beta but compute the utility gap for ~20 datasets. Ideas: fit a beta distribution to each dataset and see where it falls here; or histogram of utility gap across dataset, or x axis being a measure of aleatoric uncertainty.}
%Idea: run some experiments where screening doesn't fully reveal the outcome but just give \textit{local} information about the screened units. Like an additional feature that you can use for prediction in those units (and collapses to our setting when that signal is $\pm \infty$), and check how this utility gap vs. beta plot would change. Is screening still that informative in this case?

% This sheds light on the limits of purely algorithmic allocation: even with access to the true risk distribution, targeting efficiency varies substantially with aleatoric uncertainty; degrading to a random allocation when $\mu \sim \delta_{0.5}$.

\begin{figure}[t]
    \begin{minipage}[t]{0.49\linewidth}
    \centering
    \includegraphics[width=\linewidth]{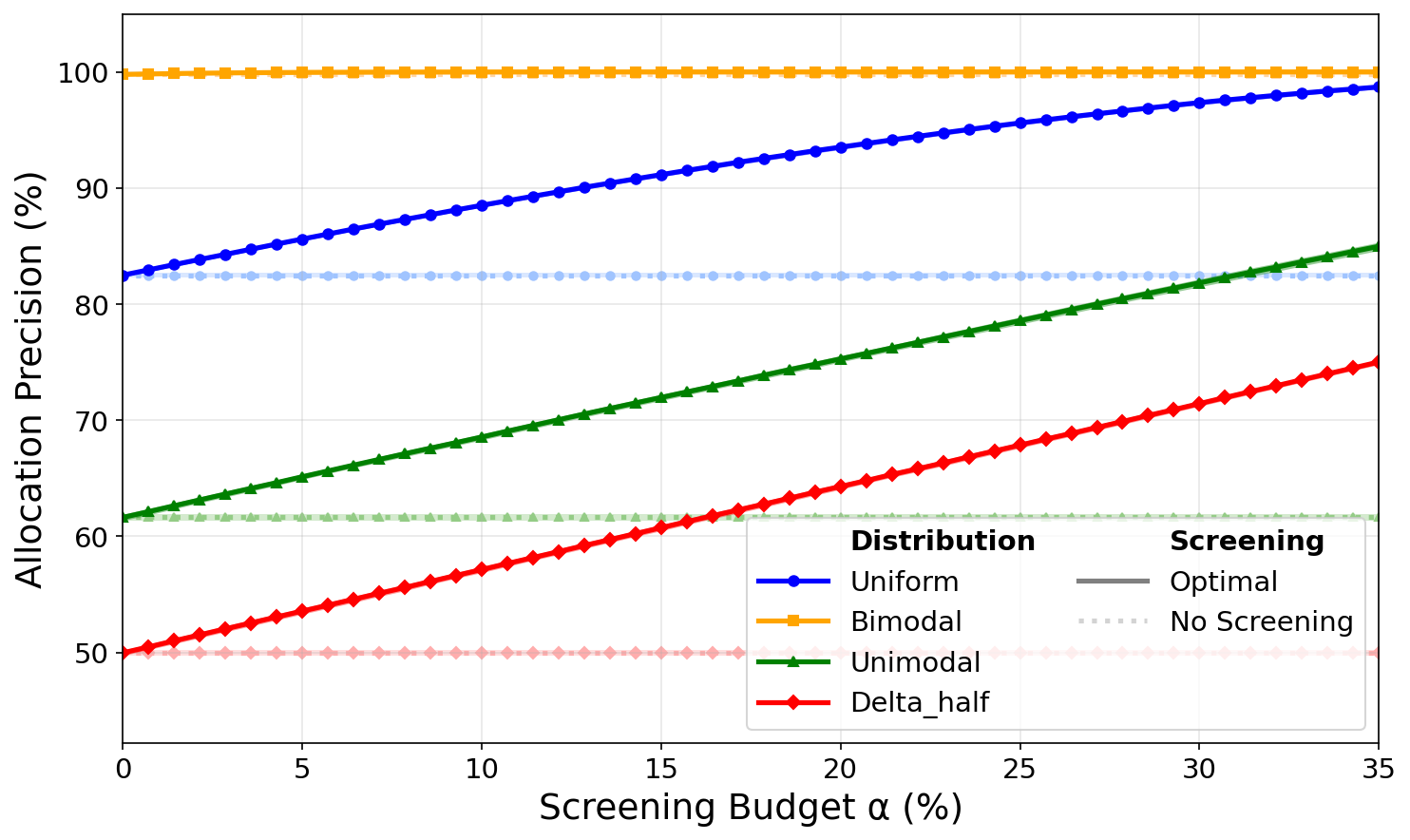}
    \caption{Allocation precision by screening budget $\alpha$ under optimal screening vs. no-screening baseline across risk distributions ($\beta = 35\%$). Lines show averages over 10 simulations; shaded regions indicate $\pm$ one standard deviation.}
    \label{fig:sim_noscreening}
    \end{minipage}
\hfill
    \begin{minipage}[t]{0.49\linewidth}
    \centering
    \includegraphics[width=\linewidth]{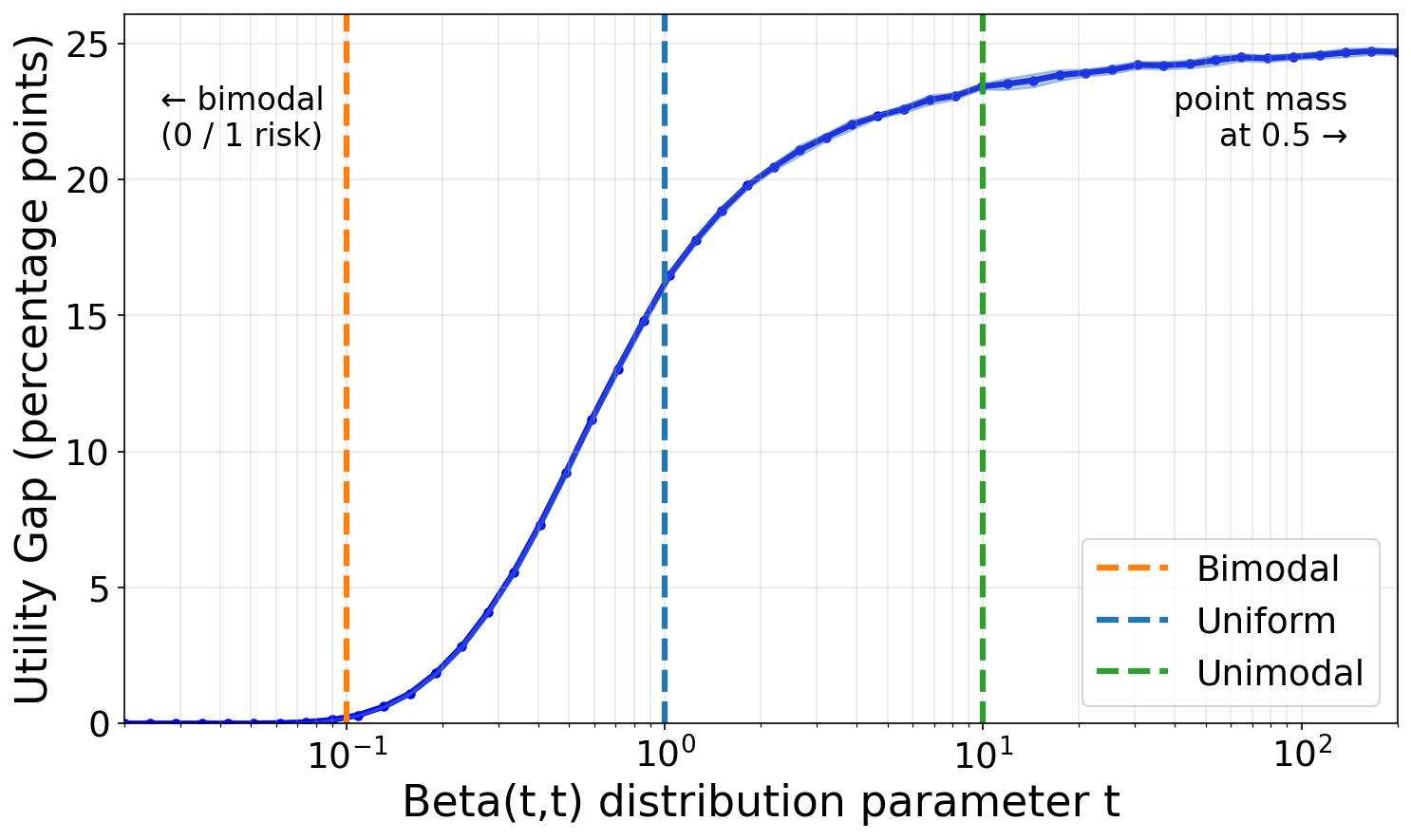}
    \caption{Utility gap $\Delta V^*(\alpha) = V^*(\alpha) - V^*(0)$ between optimal screening and no-screening as a function of the Beta$(t,t)$ concentration parameter $t$ ($\alpha=\beta=35\%$,  averaged over 10 simulations $\pm$ one standard deviation). }
    \label{fig:utility_gap_vs_dist}
    \end{minipage}
\end{figure}

\subsection{Applications to Real-World Resource Allocation} 
We apply our optimal screening framework to two real-world settings: humanitarian demining operations in Colombia and identifying low-income individuals for social protection programs in the US. In both settings, we train off-the-shelf machine learning models (logistic regression, random forest, and gradient boosting machine) on available predictive features and obtain out-of-sample risk estimates used to compute the optimal targeting policy for varying screening budgets $\alpha$. We compare against a random screening baseline that randomly observes outcomes and allocates greedily (Proposition \ref{prop:optimal_no_screening}) using predictions from logistic regression, reflecting current operational practices \cite{dulce2024reland}.

% collecting information from primary sources to determine contamination status--- from geographic, demographic, and conflict-related variables
\paragraph{Humanitarian Demining.} Humanitarian demining is the process of detecting and clearing landmines and other explosive remnants of war from conflict-affected areas so communities can safely reclaim land for economic and social development \cite{un_landmines}. A decision-maker must allocate limited specialized demining teams across a territory, but first needs to identify which areas are actually contaminated with explosive ordnance (EO). Traditional mine action operations do this by screening the entire territory, a costly and time-consuming procedure \cite{Harutyunyan2023OperationalEfficiency}. This has motivated recent proposals to estimate EO contamination probabilities using machine learning, and use these estimates to prioritize field screening. Such systems have shown promising results in guiding team allocation in Colombia \cite{dulce2024reland}, Afghanistan \cite{collins28advancements}, and Cambodia \cite{cirillo2024desk}. We apply our framework to study algorithmic allocation of demining teams in rural Colombia, using publicly available data from \cite{dulce2024reland}. The dataset includes labeled data from finalized demining operations, with 63 predictive features encompassing geographic, socio-demographic, and conflict-related characteristics. We obtain contamination risk estimates for Granada municipality, where approximately 10\% of spatial units are truly contaminated. Accordingly, we set the allocation budget to $\beta = 10\%$, matching the underlying contamination prevalence.\footnote{This implies that an allocation procedure with access to true contamination status would achieve perfect recall. Moreover, since $\beta = \EE[Y]$, precision and recall coincide in this setting.}

\paragraph{Income-Based Social Programs.} Income verification is commonly used as a gatekeeping mechanism for determining eligibility in social programs. However, collecting and verifying income information can be costly, creating a natural setting in which screening decisions must trade off verification costs against allocation accuracy. We deploy our framework to predict whether an individual's total personal income exceeds \$50{,}000 per year, a threshold commonly used in means-testing and program eligibility studies. We use the 2018 American Community Survey (ACS) Public Use Microdata Sample, accessed via the \texttt{folktables} benchmark \citep{ding2021retiring}. The dataset contains approximately 1.6 million individual records drawn from the U.S. Census Bureau's annual survey of socioeconomic conditions, with ten demographic and labor-market features per record. We split the data evenly into training and test sets, and fix the allocation budget at $\beta = 30\%$.

% : age, class of worker, educational attainment, marital status, occupation code, place of birth, relationship to household head, usual hours worked per week, sex, and race

\begin{figure}[t]
    \begin{minipage}[t]{0.49\linewidth}
    \centering
    \includegraphics[width=\linewidth]{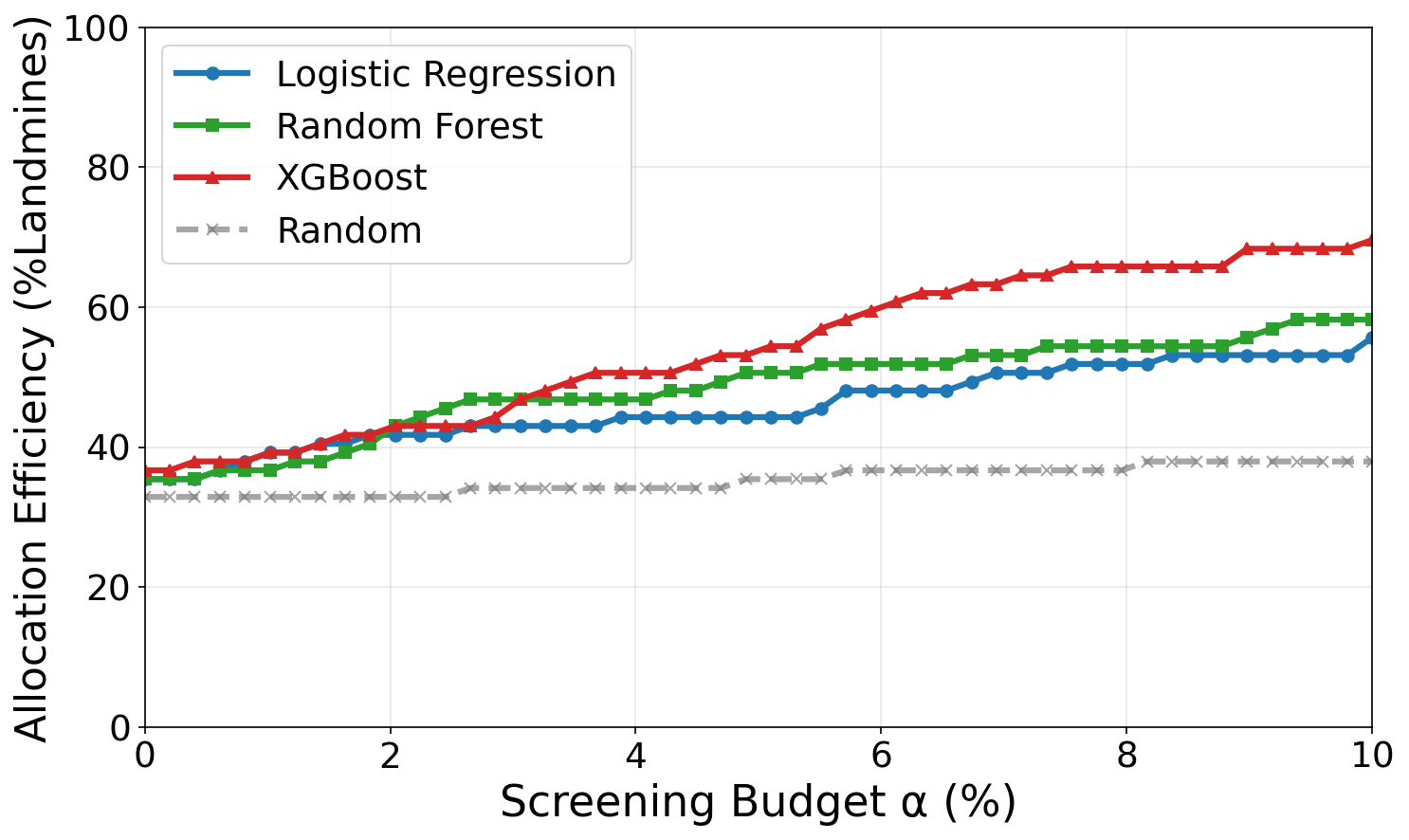}
    \end{minipage}
\hfill
    \begin{minipage}[t]{0.49\linewidth}
    \centering
    \includegraphics[width=\textwidth]{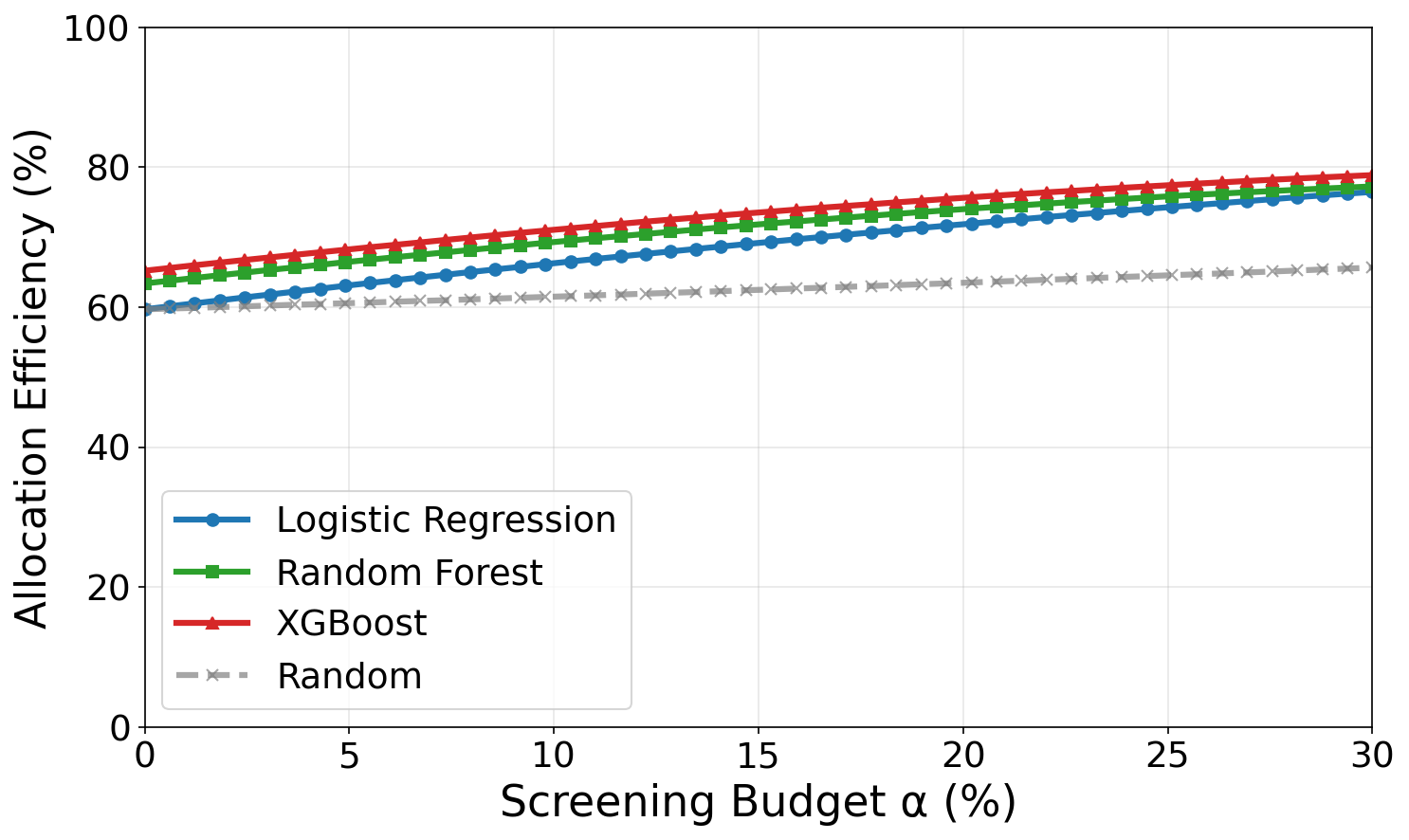}
    \end{minipage}
    \caption{Precision as a function of the screening budget $\alpha$ under optimal screening allocation. Left: landmine contamination prediction in Colombia ($\beta = 10\%$). Right: ACS 2018 low-income prediction ($\beta = 30\%$). Across all models, optimal screening consistently improves allocation precision over both the no-screening baseline ($\alpha = 0$) and random screening.}
    \label{fig:screening_real_world}
\end{figure}

\paragraph{Results.} Figure \ref{fig:screening_real_world} shows the percentage of landmines detected and the percentage of individuals correctly identified as low-income by the two-stage allocation procedure as a function of $\alpha$. Our results demonstrate that optimal screening consistently improves allocation efficiency in real-world allocation settings. In particular, XGBoost achieves the largest gains in both experiments, raising precision from $39\%$ to $70\%$ and from $65\%$ to $80\%$, respectively, at $\alpha = \beta$. Nevertheless, the monotone improvement with $\alpha$ is consistent across models regardless of their predictive performance, confirming that the framework is model-agnostic. Crucially, random screening yields negligible gains ($35\%$ to $39\%$ and $60\%$ to $65\%$, respectively), while optimal screening nearly doubles precision for the best-performing model. The comparison against random screening shows that the benefits of screening are not simply a function of the budget $\alpha$, but of the screening criteria used. Unlike the synthetic experiments, here the risk scores are estimated from data, so results reflect both aleatoric uncertainty and estimation error. That optimal screening delivers substantial gains despite imperfect risk estimates suggests the framework is robust to epistemic uncertainty in practice.

\section{Limitations and Broader Impact}

\paragraph{Limitations.} While our work assumes access to Bayes-optimal risk estimates, the practical value of screening is sensitive to estimation quality as shown in our real-world applications. Epistemic uncertainty in the risk model may shift the optimal screening set toward suboptimal units, and understanding how estimation error propagates into screening decisions is a promising direction for future work. Additionally, we treat the allocation and screening budgets as exogenous inputs, whereas in practice their appropriate values may be determined jointly, balancing statistical efficiency against domain-specific constraints and broader normative considerations.

\paragraph{Implications for Decision-Making.} 

Our work characterizes the fundamental limits of AI-driven allocation under aleatoric uncertainty and provides a principled framework for optimally combining algorithmic targeting and screening. 
We provide theoretical and empirical evidence that these two instruments are complements rather than substitutes: algorithmic targeting identifies the highest-risk units for direct allocation, while screening resolves aleatoric uncertainty for uncertain candidates. 
Operating under realistic budgets, our results identify where screening is most valuable and quantify the efficiency gains it yields. The concavity of the value of screening provides decision-makers with a principled basis for calibrating screening budgets, justifying targeted rather than universal screening, and determining when and where human verification is worth its cost. Taken together, these findings imply that improving algorithmic targeting should not come at the expense of eliminating screening, and that realizing the full potential of AI-assisted allocation requires their principled integration.

\bibliographystyle{plain}
\bibliography{references}

\clearpage
\appendix

\section{Notation}
\label{app:notation}
\begin{table}[h]
\centering
\begin{tabular}{ll}
\toprule
\textbf{Symbol} & \textbf{Definition} \\
\midrule
$X \in \mathcal{X}$ & Covariates and feature space. \\
$Y \in \{0,1\}$ & Vulnerability status indicating benefit from resource. \\
$\mu(X) = \mathbb{P}(Y=1 \mid X)$ & Bayes-optimal risk score. \\
$\beta \in (0,1)$ & Fixed allocation budget as fraction of population. \\
$\alpha \in (0,\beta)$ & Screening budget as fraction of population. \\
$S: \mathcal{X} \to \{0,1\}$ & Screening rule. \\
$T: \mathcal{X} \to \{0,1\}$ & Allocation rule for unscreened units based on covariates. \\
$\pi(X, Y)$ & Two-stage allocation rule. \\
$\widetilde{q}_\beta$ & $1-\beta$ quantile of $\mu(X)$; purely AI allocation threshold. \\
$q_\alpha, q_\beta$ & Lower and upper boundaries of the optimal screening band. \\
$[q_\alpha, q_\beta]$ & Optimal screening band. \\
$V(\alpha)$ & Allocation value with screening budget $\alpha$ (for fixed $\beta$). \\
$V^{*}(\alpha)$ & Allocation value under optimal $\alpha$-screening rule (for fixed $\beta$). \\
$\mathbb{P}_{X,Y}$ & Joint distribution of covariates and outcome. \\
$F$ & Cumulative distribution function (CDF) of $\mu$. \\ 
\bottomrule
\end{tabular}
\caption{Summary of notation.}
\label{tab:notation}
\end{table}

\section{Proofs}
\label{app:proofs}

\begin{proof}[Proof of Proposition~\ref{prop:optimal_no_screening} (Optimal Allocation Rule without Screening)]
We proceed by contradiction. Let $q^* = q_{1-\beta}$ denote the $(1-\beta)$-quantile of $\mu(X)$, and suppose $T$ is an optimal allocation rule that differs from the threshold rule $T^*(X) = \mathbf{1}\{\mu(X) > q^*\}$. Since $T \neq T^*$ and both exhaust exactly budget $\beta$, they must differ on a set of positive measure. In particular, there exists a unit $x_0$ allocated by $T^*$ but not by $T$, i.e., $\mu(x_0) > q^*$ and $T(x_0) = 0$. Conversely, since both rules exhaust the same budget $\beta$ but $T$ does not allocate $x_0$, it must allocate some unit $x_1$ that $T^*$ does not, i.e., $\mu(x_1) \leq q^*$ and $T(x_1) = 1$. In particular it follows that $\mu(x_0) > q^* \geq \mu(x_1)$, so $\mu(x_0) > \mu(x_1)$.

Define $\tilde{T}$ to agree with $T$ everywhere except at $x_0$ and $x_1$, specifically:
$$\tilde{T}(x_0) = 1, \qquad \tilde{T}(x_1) = 0.$$
This swap preserves feasibility: $\mathbb{E}[\tilde{T}(X)] = \mathbb{E}[T(X)] = \beta$. The change in objective is:
$$\mathbb{E}[\tilde{T}(X)\,\mu(X)] - \mathbb{E}[T(X)\,\mu(X)] = \mu(x_0) - \mu(x_1) > 0,$$
strictly improving the value, contradicting the optimality of $T$. Hence no such $T$ can exist, and the optimal rule must be the quantile threshold $T^*(X) = \mathbf{1}\{\mu(X) > q^*\}$.
\end{proof}

\begin{proof}[Proof of Theorem~\ref{thm:optimal_screening} (Optimal Screening in Two-Stage Allocation)]

To prove the result, we first show that the optimal screening set $S^*(X)$ lies entirely below the optimal direct allocation set $T^*(X)$. With this, the second-stage allocation for unscreened units reduces to a no-screening optimization problem with a threshold rule solution. Finally, we show that the two optimal sets are contiguous, with the screening band sitting immediately below the direct allocation band, and that the boundaries are uniquely determined by the budget constraints.

\textit{Step 1: No directly allocated unit lies below a screened one.} Suppose for contradiction that under the optimal policy, unit $z$ with $\mu(z) < \mu(z')$ is directly allocated while unit $z'$ is screened. Consider the alternative policy that swaps their roles: screen $z$ and directly allocate $z'$. This swap preserves the screening budget constraint since $\mathbb{E}[S(X)]$ is unchanged. The direct contribution to the objective from these two units is unchanged since both screened and directly allocated units contribute $\mu(X)$ in expectation. However, the allocation budget used by the screened unit changes from $\mu(z')$ to $\mu(z) < \mu(z')$, saving $\mu(z') - \mu(z) > 0$ of allocation budget. This extra budget can be used to extend the direct allocation band to additional units with positive risk, strictly increasing the objective. This contradicts optimality of the original policy. Hence at optimality, no directly allocated unit has lower risk than any screened unit, i.e., the screening set lies entirely below the direct allocation set.

\textit{Step 2: Given $S$, the optimal allocation rule is a threshold in $\mu(X)$.} Fix any feasible screening rule $S$ with $\mathbb{E}[S(X)] \leq \alpha$. For screened units, the allocation rule is fixed at $\pi(X,Y) = Y$ by definition. For unscreened units, the remaining allocation problem is
\begin{equation}
    \max_{T: \mathcal{X} \to \{0,1\}} \mathbb{E}[(1-S(X))T(X)\mu(X)] \quad \text{s.t.} \quad \mathbb{E}[S(X)Y] + \mathbb{E}[(1-S(X))T(X)] \leq \beta.
\end{equation}
This is identical in structure to the no-screening problem of Proposition~\ref{prop:optimal_no_screening} with effective budget $\beta - \mathbb{E}[S(X)Y]$ on the subpopulation with $S(X) = 0$. By Proposition~\ref{prop:optimal_no_screening}, the optimal unscreened allocation rule is a threshold rule in $\mu(X)$ within this subpopulation. From Step 1, we have that the screening set lies entirely below the direct allocation set, and the subpopulation threshold translates directly to a threshold in the original risk score space, $T^*(X) = \mathbf{1}\{\mu(X) > q_\beta\}$ for some $q_\beta$.

\textit{Step 3: The optimal screening interval is $[q_\alpha, q_\beta]$.} By Steps 1 and 2, and Proposition \ref{prop:convexity} the optimal policy has the following form: screen some interval $[q_a, q_b]$ with $q_b \leq q_\beta$ and directly allocate $[q_\beta, 1]$, where $q_\beta$ is determined by the allocation budget constraint. We now show the optimal screening interval satisfies $q_b = q_\beta$.

Consider the objective as a function of $q_b$, with $q_a = F^{-1}(F(q_b) - \alpha)$ determined by the screening budget constraint, and $q_\beta = q_\beta(q_b)$ determined by the allocation budget constraint:
\begin{equation}
    \int_{q_a}^{q_b} \mu \, dF(\mu) + 1 - F(q_\beta(q_b)) = \beta.
\end{equation}
The objective is:
\begin{equation}
    V(q_b) = \int_{q_a}^{q_b} \mu \, dF(\mu) + \int_{q_\beta(q_b)}^{1} \mu \, dF(\mu).
\end{equation}
Differentiating with respect to $q_b$, using $\frac{dq_a}{dq_b} = \frac{f(q_b)}{f(q_a)}$ from the screening budget constraint and $\frac{dq_\beta}{dq_b} = \frac{f(q_b)(q_b - q_a)}{f(q_\beta)}$ from the allocation budget constraint:
\begin{equation}
    \frac{dV}{dq_b} = q_b f(q_b) - q_a f(q_b) - q_\beta f(q_b)(q_b - q_a) = f(q_b)(q_b - q_a)(1 - q_\beta) > 0,
\end{equation}
since $q_b > q_a$ and $q_\beta < 1$, for $\alpha, \beta > 0$, and $f > 0$ from having full support on $[0,1]$. Hence $V$ is strictly increasing in $q_b$ and the optimal screening interval is pushed as high as possible. An alternative approach is presented in the proof of Proposition \ref{prop:convexity} which directly shows that the gains from screening higher-risk units are larger than the reduction in direct allocation value from a smaller available budget.

The upper boundary is therefore $q_b = q_\beta$, and $q_\alpha = F^{-1}(F(q_\beta) - \alpha)$ from the screening budget constraint.
\end{proof}

\begin{proposition}[Interval Structure of Optimal Screening]
\label{prop:convexity}
Let $\alpha \in (0,\beta)$ and $\beta \in (0,1)$ with $\alpha + \beta < 1$. The optimal screening rule $S^*$ has an interval structure in the risk score space of the form $S^*(X) = \mathbbm{1}\{\mu(X) \in [q_a, q_b]\}$ for some quantiles $0 < q_a \leq q_b < 1$.
\end{proposition}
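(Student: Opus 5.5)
We will prove that, at any optimum, the screening set can be taken to be an interval in risk space. We use an exchange argument on the mass of screened units, combined with Steps 1–2 of the proof of Theorem~\ref{thm:optimal_screening}. Those steps already give a threshold $q_\beta$ such that the unscreened allocation is $T^*(X)=\mathbf{1}\{\mu(X)>q_\beta\}$, with every screened unit having risk at most $q_\beta$.

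\textbf{Reduction to risk space.} First we reduce the problem to one on the risk space. Both the objective and the constraints depend on $X$ only through $\mu(X)$. So any screening rule can be replaced by a (possibly randomized) rule that depends on $\mu(X)$ alone without changing the value. We then view $S$ as a subset $A\subseteq[0,q_\beta]$, or more generally a density $s(\mu)\in[0,1]$, with $\int s\,dF=\alpha$.

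\textbf{Value bookkeeping.} For fixed $A$, the remaining allocation budget $\beta-\int_A\mu\,dF$ is spent on units not in $A$, in decreasing order of risk. We will show that $A$ should contain no gaps. Suppose $\mu_1<\mu_2<\mu_3$ with $\mu_1,\mu_3$ in $A$ (positive mass) and $\mu_2\notin A$. There are two cases for the unit at $\mu_2$.

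\emph{Case (i): $\mu_2$ is directly allocated.} This contradicts Step 1, since $\mu_1<\mu_2$ is screened while $\mu_2$ is directly allocated.

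\emph{Case (ii): $\mu_2$ is not targeted at all.} Swap an equal small mass $\epsilon$: stop screening near $\mu_1$ and start screening near $\mu_2$. The screening budget is unchanged. The screened value rises by $\epsilon(\mu_2-\mu_1)$. The allocation budget used rises by the same amount, so the direct allocation threshold moves up, and the lost direct value is at most $\epsilon(\mu_2-\mu_1)q_\beta$, because the marginal directly allocated unit has risk about $q_\beta$. The net change is therefore at least $\epsilon(\mu_2-\mu_1)(1-q_\beta)>0$. This is the discrete analogue of the derivative $f(q_b)(q_b-q_a)(1-q_\beta)$ computed in Step 3.

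Since every gap yields a strict improvement, no gap survives at the optimum. Hence $A$ is, up to null sets, an interval $[q_a,q_b]$.

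\textbf{Endpoints.} For the bounds $0<q_a\le q_b<1$:
\begin{itemize}
\item $q_b\le q_\beta<1$: since $\alpha<\beta$, the direct allocation set has positive mass $1-F(q_\beta)\ge\beta-\alpha>0$.
\item $q_a>0$: full support and $\alpha+\beta<1$ leave positive mass below the interval. Screening is never placed at risk $0$, because a positive-risk unscreened unit is always available to swap in under the exchange above.
\end{itemize}

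\textbf{Main obstacle.} The delicate part is making the $\epsilon$-exchange rigorous in the continuum. We must control the shift of $q_\beta$ to first order, using $f>0$ from full support so that $dq_\beta$ is well defined. We must also handle atoms or randomized screening at endpoints. We would treat both via the Lagrangian/first-order perturbation of the allocation constraint, rather than through pointwise unit swaps.
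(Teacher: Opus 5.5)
Your proof is essentially the paper's exchange argument: swap screening mass from a low-risk screened unit to a higher-risk unscreened, unallocated unit inside the gap, so the screened value rises by the same amount $m$ by which the direct-allocation budget shrinks, and the lost direct value is at most $m$ times the new threshold, which is $<1$. Your $\epsilon$-mass version and your endpoint discussion are, if anything, more careful than the paper's pointwise swap.

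There are two slips to fix.

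First, in Case (i) the contradiction with Step 1 comes from $\mu_3$ being screened above the directly allocated $\mu_2$. A screened $\mu_1$ lying below a directly allocated unit is exactly what Step 1 permits.

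Second, the loss bound must use the new threshold $q_\beta'$, not $q_\beta$, because the dropped units have risk in $[q_\beta,q_\beta']$. This gives a net gain of at least $\epsilon(\mu_2-\mu_1)(1-q_\beta')>0$, since $1-F(q_\beta')\geq\beta-\alpha>0$. That matches the paper's bound with $\widetilde{q}<1$.
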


\begin{proof}[Proof of Proposition~\ref{prop:convexity} (Interval Structure of Optimal Screening)]
Suppose by contradiction that there is a gap in the optimal screening set $S^*$. That is, there exist units $x_a, x_b$ with $\mu(x_a) < \mu(x_b)$, both screened, and a unit $x_m$ with $\mu(x_a) < \mu(x_m) < \mu(x_b)$ that is \emph{not} screened. Note that $x_m$ is also not directly allocated from Step 1 in the proof of Theorem \ref{thm:optimal_screening}.

Consider swapping $x_a$ out of the screening set and replacing it with $x_m$, keeping all else equal, and define this rule as $\widetilde{S}$, satisfying the screening budget $\EE[S^*] = \EE[\widetilde{S}] = \alpha$. Since $\mu(x_m) > \mu(x_a)$, $\widetilde{S}$ now covers a unit with strictly higher risk, yielding strictly more expected positives from screening:
$$\mathbb{E}[\widetilde{S}Y] - \mathbb{E}[S^* Y] = \mu(x_m) - \mu(x_a) > 0.$$
However, the expected saved budget decreases by $\mu(x_m) - \mu(x_a)$, reducing the budget available for direct allocation. Define $q^* = q_{1-\beta+\EE[S^*Y]}$ and $\widetilde{q} = q_{1-\beta+\EE[\widetilde{S}Y]}$ as the optimal threshold rules for the direct allocation set from Proposition \ref{prop:optimal_no_screening}, from screening sets $S^*$ and $\widetilde{S}$, respectively. We have that the change in contribution from direct allocation is then
$$\int_{q^*}^1\mu dF(\mu) - \int_{\widetilde{q}}^1\mu dF(\mu) = \int_{q^*}^{\widetilde{q}}\mu dF(\mu) \leq \widetilde{q}(\mu(x_m) - \mu(x_a)) < (\mu(x_m) - \mu(x_a)),$$
where the first inequality follows from $F(\widetilde{q}) - F(q^*) = \EE[\widetilde{S}Y] - \EE[S^*Y] = \mu(x_m) - \mu(x_a)$, and the second from $\widetilde{q} < 1$ provided $\alpha + \beta < 1$. 

Therefore, the gains from screening higher-risk units exceed the reduction in contribution from direct allocation given a smaller available budget. That is, the net change in allocation efficiency is strictly positive, contradicting the optimality of $S^*$. Hence the optimal screening set is an interval in the risk space.

\end{proof}

\begin{proof}[Proof of Theorem~\ref{thm:value_screening} (Marginal Value of Screening)]
The proof proceeds in two steps: we first differentiate $V^*(\alpha)$ via Leibniz's rule, and then obtain $dq_{\beta}/d\alpha$ via implicit differentiation of the budget constraint.

By Leibniz's rule applied to the allocation value under optimal screening \eqref{eq:value}:
$$\frac{dV^*}{d\alpha} = -q_{\alpha} f(q_\alpha) \frac{dq_{\alpha}}{d\alpha} = -q_{\alpha} f(q_\alpha)\frac{1}{f(q_{\alpha})}\left(f(q_{\beta})\frac{dq_{\beta}}{d\alpha} - 1\right),$$
where $\frac{dq_{\alpha}}{d\alpha}$ is obtained from the relationship $q_{\alpha} = F^{-1}(F(q_{\beta}(\alpha)) - \alpha)$. Moreover, from this expression we have the partial derivatives for $q_\alpha$:
$$\frac{\partial q_{\alpha}}{\partial q_{\beta}} = \frac{f(q_\beta)}{f(q_{\alpha})}, \qquad
\frac{\partial q_{\alpha}}{\partial \alpha} = -\frac{1}{f(q_{\alpha})}.$$

To derive $\frac{dq_{\beta}}{d\alpha}$, define the budget function
$$ G(q_{\beta}, \alpha) \equiv \int_{q_{\alpha}}^{q_{\beta}} \mu dF(\mu) + 1 - F(q_{\beta}) - \beta =0.$$
Applying Leibniz's rule to differentiate $G$ with respect to $q_{\beta}$:
$$\frac{\partial G}{\partial q_{\beta}} = q_{\beta} f(q_{\beta}) - q_{\alpha} f(q_{\alpha}) \frac{\partial q_{\alpha}}{\partial q_{\beta}} - f(q_{\beta}) = f(q_{\beta})\bigl(q_{\beta} - q_{\alpha} - 1\bigr).$$
Differentiating $G$ with respect to $\alpha$:
$$\frac{\partial G}{\partial \alpha} = -q_{\alpha} f(q_{\alpha}) \frac{\partial q_{\alpha}}{\partial \alpha} = q_{\alpha}.$$
By the implicit function theorem:
$$\frac{dq_{\beta}}{d\alpha} = -\frac{\partial G / \partial \alpha}{\partial G / \partial q_{\beta}} = -\frac{q_{\alpha}}{f(q_{\beta})(q_{\beta} - q_{\alpha} - 1)} = \frac{q_{\alpha}}{f(q_{\beta})(1 - q_{\beta} + q_{\alpha})}.$$
This derivative is always positive as $1 > q_{\beta} \geq  q_{\alpha} > 0$. %That is, as budget increases the upper limit of the screening interval also increases (and the lower limit decreases from its partial derivative being always negative).

Finally, replacing this result for $dq_{\beta}/d\alpha$ in $dV^*/d\alpha$:
$$\frac{dV^*}{d\alpha} = -q_{\alpha} {f(q_{\alpha})} \frac{1}{f(q_{\alpha})}\left(f(q_{\beta})\frac{dq_{\beta}}{d\alpha} - 1\right)= -q_{\alpha}\left(\frac{q_{\alpha}}{1 - q_{\beta} + q_{\alpha}} - 1\right) = \frac{q_{\alpha}(1 - q_{\beta})}{1 - q_{\beta} + q_{\alpha}}.$$
Positivity follows immediately again by $1 - q_{\beta} + q_{\alpha} > 0$, from $q_\alpha, q_\beta \in (0,1)$.

To establish concavity, we differentiate the expression for $dV^*/d\alpha$ with respect to $\alpha$. Substituting the expressions for $dq_\alpha/d\alpha$
and $dq_\beta/d\alpha$ established above:
$$\frac{d^2V^*}{d\alpha^2} = -\frac{(1-q_\beta)^3 f(q_\beta) + q_\alpha^3 f(q_\alpha)}{f(q_\alpha)f(q_\beta)(1-q_{\beta} + q_{\alpha})^3} < 0,$$
where the inequality follows since $q_\alpha, q_\beta \in (0,1)$. Hence $V^*(\alpha)$ is concave in $\alpha$, formalizing a law of diminishing returns to screening.

\end{proof}

\begin{proof}[Proof of Corollary~\ref{cor:at_the_margin}]

Following the proof of Theorem \ref{thm:value_screening}, we have that
$$\frac{dq_{\beta}}{d\alpha} = \frac{q_{\alpha}}{f(q_{\beta})(1 - q_{\beta} + q_{\alpha})},$$
which is always positive given $0 < q_{\alpha} \leq  q_{\beta} < 1$. That is, as the screening budget increases, the upper limit of the screening interval also increases.

Moreover, from the budget constraint $q_{\alpha} = F^{-1}(F(q_{\beta}(\alpha)) - \alpha)$ we have
$$\frac{dq_{\alpha}}{d\alpha} = \frac{1}{f(q_{\alpha})}\left(f(q_{\beta})\frac{dq_{\beta}}{d\alpha} - 1\right) = \frac{1}{f(q_{\alpha})}\left(\frac{q_{\alpha}}{1 - q_{\beta} + q_{\alpha}} - 1\right) = \frac{1}{f(q_{\alpha})}\left(\frac{q_{\beta} - 1}{1 - q_{\beta} + q_{\alpha}}\right),$$
which is always negative for $q_{\beta} < 1$. That is, as the screening budget grows, the lower limit of the screening interval decreases.

To establish that the purely algorithmic allocation threshold $\widetilde{q}_{\beta}$ falls within the optimal screening band, note that when $\alpha = 0$ the screening set collapses to $q_{\alpha} = q_{\beta} = \widetilde{q}_{\beta}$. Then, for $\alpha > 0$, $q_{\alpha}$ decreases and $q_{\beta}$ increases, so  $\widetilde{q}_{\beta} \in [q_{\alpha},  q_{\beta}]$ and the screening set selects units at the margin of algorithmic allocation around $\widetilde{q}_{\beta}$.
\end{proof}

\begin{corollary}
\label{cor:uniform}
    Assume the risk is uniformly distributed, $\mu \sim \text{Unif}[0, 1]$. Then the optimal screening thresholds admit the closed-form expressions:
    $$q_\beta = \frac{1-\beta-\alpha^2/2}{1-\alpha}, \qquad q_\alpha = q_\beta - \alpha.$$
\end{corollary}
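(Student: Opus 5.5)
The plan is to specialize the budget constraints \eqref{eq:budget_constraints} of Theorem~\ref{thm:optimal_screening} to the uniform case and solve them directly. The uniform distribution on $[0,1]$ has full support, and the corollary inherits the standing assumptions $0<\alpha<\beta<1$, $\alpha+\beta<1$. So Theorem~\ref{thm:optimal_screening} applies: the optimal policy screens $[q_\alpha,q_\beta]$ and directly allocates above $q_\beta$. It therefore suffices to compute the thresholds from the two equations. With $F(u)=u$ and density $f\equiv 1$ on $[0,1]$, both equations become elementary.

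First, the screening constraint $F(q_\beta)-F(q_\alpha)=\alpha$ reads $q_\beta-q_\alpha=\alpha$. This gives the second claimed formula, $q_\alpha=q_\beta-\alpha$, at once.

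Next I substitute this into the allocation constraint. The screening integral is
\begin{equation*}
\int_{q_\alpha}^{q_\beta}\mu\,d\mu=\tfrac12(q_\beta-q_\alpha)(q_\beta+q_\alpha)=\tfrac{\alpha}{2}(2q_\beta-\alpha)=\alpha q_\beta-\tfrac{\alpha^2}{2}.
\end{equation*}
The allocation constraint then becomes the linear equation $\alpha q_\beta-\alpha^2/2+1-q_\beta=\beta$, i.e.\ $(1-\alpha)q_\beta=1-\beta-\alpha^2/2$. Since $\alpha<1$, dividing by $1-\alpha$ gives the stated expression for $q_\beta$. The solution is unique, which is consistent with the theorem's characterization.

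The only step needing care is checking that these values are admissible, i.e.\ $0<q_\alpha\le q_\beta<1$, so that the interval really lies inside the support and the theorem's form is attained.
\begin{itemize}
\item $q_\alpha\le q_\beta$ is immediate from $q_\alpha=q_\beta-\alpha$.
\item $q_\beta<1$ is equivalent to $1-\beta-\alpha^2/2<1-\alpha$, i.e.\ $\alpha-\alpha^2/2<\beta$. This holds because $\alpha-\alpha^2/2<\alpha<\beta$.
\item $q_\alpha>0$ is equivalent to $q_\beta>\alpha$, i.e.\ $1-\beta-\alpha^2/2>\alpha-\alpha^2$, i.e.\ $1-\alpha-\beta+\alpha^2/2>0$. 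This follows from $\alpha+\beta<1$.
\end{itemize}
I expect no real obstacle here. The only point to flag is that these inequality checks are exactly where the hypotheses $\alpha<\beta$ and $\alpha+\beta<1$ are used.
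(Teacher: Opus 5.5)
Your proposal is correct and follows the paper's proof essentially verbatim. The screening constraint gives $q_\alpha = q_\beta - \alpha$, and substituting this into the allocation constraint yields the linear equation $(1-\alpha)q_\beta = 1-\beta-\alpha^2/2$. Your extra check that $0<q_\alpha\le q_\beta<1$ under $\alpha<\beta$ and $\alpha+\beta<1$ is a small addition that the paper omits.
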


\begin{proof}
Under uniform risk $\mu \sim \text{Unif}[0, 1]$, the screening constraint implies
$$F(q_{\alpha}) = F(q_{\beta}) - \alpha \quad\Longrightarrow\quad q_{\alpha} = q_{\beta} - \alpha.$$

Therefore, the allocation constraint \eqref{eq:budget_constraints} becomes 
\begin{align*}
    \beta = \int_{q_{\alpha}}^{q_{\beta}} \mu dF(\mu) + 1 - F(q_{\beta}) = \frac{1}{2}q_{\beta}^2 - \frac{1}{2}(q_{\beta} - \alpha)^2 + 1 - q_{\beta}.
\end{align*}
Solving for $q_{\beta}$ yields the result.
\end{proof}

\section{Algorithm for Computing the Optimal Screening Quantiles}

\label{app:algorithm}
\begin{algorithm*}[ht]
  \caption{Fixed-Point Algorithm for Optimal Screening and Allocation Assignment}
  \label{algo}
  \begin{algorithmic}[1]
  \Require Population sorted by increasing risk, allocation budget $\beta$, screening budget $\alpha$.
  \Ensure Band assignments that maximize allocation efficiency
  \State $\rho_{\text{prev}} \gets 0$ \Comment{Initialize average risk of Screening Band}
  \For{$k = 1$ to $\mathtt{max\_iterations}$}
      \State \textbf{Compute target mass for each band:}
      \State $m_{alloc} \gets \beta - \alpha$ \Comment{Direct Allocation Band}
      \State $m_{res} \gets \alpha \cdot (1 - \rho_{\text{prev}})$ \Comment{Residual Allocation Band from saved budget}
      \State $m_{screen} \gets \alpha$ \Comment{Screening Band}
      \State
      \State \textbf{Assign units to bands by cumulative mass:}
      \State $i_{alloc} \gets \min\{i : \sum_{j=i}^{n} \frac{1}{n} \geq m_{alloc}\}$ \Comment{Start of Direct Allocation Band}
      \State $i_{res} \gets \min\{i : \sum_{j=i}^{n} \frac{1}{n} \geq m_{alloc} + m_{res}\}$ \Comment{Start of Residual Allocation Band}
      \State $i_{screen} \gets \min\{i : \sum_{j=i}^{n} \frac{1}{n} \geq m_{alloc} + m_{res} + m_{screen}\}$ \Comment{Start of Screening Band}
      \State
      \State \textbf{Recompute average risk in Screening Band:}
      \State $\rho_{\text{curr}} \gets \frac{1}{i_{res} - i_{screen}} \sum_{j=i_{screen}}^{i_{res}-1} \mu_j$ \Comment{$\mu_j$ = risk of unit $j$}
      \State
      \If{$|\rho_{\text{curr}} - \rho_{\text{prev}}| < \mathtt{tolerance}$}
          \State \textbf{break} \Comment{Convergence achieved}
      \EndIf
      \State $\rho_{\text{prev}} \gets \rho_{\text{curr}}$ \Comment{Update for next iteration}
  \EndFor
  \State
  \Return Band assignments $(i_{alloc}, i_{res}, i_{screen})$ with converged $\rho_{\text{prev}}$
  \end{algorithmic}
\end{algorithm*}

\begin{theorem}[Fixed Point: Existence, Uniqueness, and Convergence of Algorithm \ref{algo}]\label{thm:fixed_point}
    Let $\alpha \in (0, \beta)$ and $\beta + \alpha < 1$. The update function  $g(\rho) = \EE[\mu \mid \mu \in [q_{\alpha}(\rho), q_{\beta}(\rho)]]$ of Algorithm \ref{algo} is a contraction on $[0, 1]$ with nonnegative Lipschitz constant
    $$c_F := F^{-1}(1-\beta+\alpha) - F^{-1}(1-\beta-\alpha) < 1.$$
    By Banach's fixed point theorem, there exists a unique $\rho^* \in [0,1]$ with $g(\rho^*) = \rho^*$, and Algorithm \ref{algo} converges for any $\rho^{(0)} \in [0,1]$:
    $$|\rho^{(k)} - \rho^*| \leq c_F^k\,|\rho^{(0)} - \rho^*|.$$
\end{theorem}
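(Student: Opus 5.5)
The plan is to make $g$ explicit through the quantile function and then bound its increments directly. Write $F^{-1}$ for the (left-continuous, nondecreasing) quantile function of $\mu(X)$.

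In Algorithm~\ref{algo}, for a candidate average risk $\rho$ the direct allocation band has mass $\beta-\alpha$ and the residual band has mass $\alpha(1-\rho)$. The screening band, of mass $\alpha$, therefore occupies the quantile levels $[u_\rho-\alpha,\,u_\rho]$, where
$$u_\rho := 1-(\beta-\alpha)-\alpha(1-\rho) = 1-\beta+\alpha\rho .$$
Consequently
$$q_\beta(\rho)=F^{-1}(u_\rho), \qquad q_\alpha(\rho)=F^{-1}(u_\rho-\alpha), \qquad g(\rho)=\frac{1}{\alpha}\int_{u_\rho-\alpha}^{u_\rho}F^{-1}(s)\,ds .$$
This uses the standard identity $\EE[\mu\,\mathbf 1\{\mu\in[F^{-1}(a),F^{-1}(b)]\}]=\int_a^b F^{-1}(s)\,ds$ for continuous $F$. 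For $\rho\in[0,1]$ we have $u_\rho\in[1-\beta,\,1-\beta+\alpha]$, so all levels involved lie in $[1-\beta-\alpha,\,1-\beta+\alpha]\subset(0,1)$. The inclusion holds by the hypotheses $\alpha<\beta$ and $\alpha+\beta<1$. Since $g(\rho)$ is an average of values in $[0,1]$, $g$ maps $[0,1]$ into itself.

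Next comes the Lipschitz bound. Take $\rho'<\rho$ and set $u=u_\rho$, $u'=u_{\rho'}$, so that $u-u'=\alpha(\rho-\rho')$. Then
$$\alpha\bigl(g(\rho)-g(\rho')\bigr)=\int_{u'}^{u}F^{-1}(s)\,ds-\int_{u'-\alpha}^{u-\alpha}F^{-1}(s)\,ds .$$
Both intervals have length $u-u'$. By monotonicity of $F^{-1}$, the first integral is at most $(u-u')F^{-1}(u)$ and the second is at least $(u-u')F^{-1}(u'-\alpha)$. The difference is also nonnegative, because the first window lies above the second. Hence
$$0\le g(\rho)-g(\rho')\le(\rho-\rho')\bigl(F^{-1}(u)-F^{-1}(u'-\alpha)\bigr)\le(\rho-\rho')\bigl(F^{-1}(1-\beta+\alpha)-F^{-1}(1-\beta-\alpha)\bigr)=c_F(\rho-\rho').$$
Where $F^{-1}$ is differentiable this is just the statement $g'(\rho)=q_\beta(\rho)-q_\alpha(\rho)$, but the integral argument avoids any smoothness assumption.

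The step I expect to need the most care is $c_F<1$. Full support of $F$ on $[0,1]$ together with continuity gives $F^{-1}(p)<1$ for every $p<1$ and $F^{-1}(p)>0$ for every $p>0$. Applying this at $p=1-\beta+\alpha<1$ and $p=1-\beta-\alpha>0$ shows $c_F<1-0=1$, and $c_F\ge 0$ by monotonicity. I would check the endpoint convention for $F^{-1}$ carefully here, since atoms at $0$ or $1$ are exactly what would break this bound.

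With $g$ a contraction on the complete space $[0,1]$, Banach's fixed point theorem gives a unique $\rho^*$. Iterating the Lipschitz bound gives $|\rho^{(k)}-\rho^*|\le c_F^k|\rho^{(0)}-\rho^*|$. Finally, I would remark that the fixed point satisfies $\int_{q_\alpha}^{q_\beta}\mu\,dF=\alpha\rho^*$ with $F(q_\beta)=1-\beta+\alpha\rho^*$. This is exactly the pair of budget constraints \eqref{eq:budget_constraints}, so the algorithm recovers the thresholds of Theorem~\ref{thm:optimal_screening}.
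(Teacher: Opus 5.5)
Your proposal is correct and follows essentially the same route as the paper: you cancel the common part of the two screening windows, bound the upper sliver of mass $\alpha(\rho-\rho')$ by its largest risk and the lower sliver by its smallest, obtain $0\le g(\rho)-g(\rho')\le c_F(\rho-\rho')$, and invoke Banach. Writing $g$ as $\frac{1}{\alpha}\int F^{-1}(s)\,ds$ over quantile levels is only a change of variables that turns the overlap cancellation into an additivity identity, and your remark that $c_F<1$ requires $F$ to have no atoms at $0$ or $1$ makes explicit a hypothesis the paper's proof uses without stating it.
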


\begin{proof}[Proof of Theorem~\ref{thm:fixed_point}]
    %We will proceed by first proving that the function $g$ is continuous. 
    %Under $\alpha \in (0,\beta)$ and $\beta+\alpha < 1$, both arguments of $F^{-1}$ lie strictly in $(0,1)$ for all $\rho \in [0,1]$, so $g$ is well-defined, continuous, 
    
    Define the update function $g$ as the average risk over the screening band:
    $$g(\rho) = \frac{1}{\alpha}\int_{q_{\alpha}(\rho)}^{q_{\beta}(\rho)} \mu \, dF(\mu),$$
    where the band thresholds
    \begin{align*}
        q_{\beta}(\rho) &= F^{-1}(1 - \beta + \alpha\rho), \\
        q_{\alpha}(\rho) &= F^{-1}(1 - \beta - \alpha + \alpha\rho),
    \end{align*}
    are determined by the budget constraints of Theorem~\ref{thm:optimal_screening} and the average screening risk $\rho$, satisfying $q_\alpha < q_\beta$ and $F(q_\beta) - F(q_\alpha) = \alpha$. 

    First note that since $g(\rho)$ is an average of $\mu \in [0,1]$, we have that $g(\rho) \in [0,1]$ for all $\rho \in [0,1]$, and $g$ maps $[0,1]$ into $[0,1]$. 
    
    We now show that $g$ is a contraction. Let $\rho_1 < \rho_2$, with corresponding screening bands $[q_{\alpha}(\rho_1), q_{\beta}(\rho_1)]$ and $[q_{\alpha}(\rho_2), q_{\beta}(\rho_2)]$ of $F$-mass $\alpha$. Given that $F^{-1}$ is nondecreasing and $\rho_2 - \rho_1 \leq 1$, we have  $q_{\alpha}(\rho_2) = F^{-1}(1 - \beta - \alpha(1-\rho_2)) \leq F^{-1}(1 - \beta + \alpha\rho_1) = q_{\beta}(\rho_1)$. That is, the two intervals always overlap (or touch at the boundary).
     
    The overlapping region $[q_{\alpha}(\rho_2), q_{\beta}(\rho_1)]$ cancels in the difference, yielding 
    $$g(\rho_2) - g(\rho_1) = \frac{1}{\alpha}\left[\int_{q_\beta(\rho_1)}^{q_\beta(\rho_2)}\mu\,dF(\mu) - \int_{q_\alpha(\rho_1)}^{q_\alpha(\rho_2)}\mu\,dF(\mu)\right].$$
    Both integrals are over intervals of $F$-mass $\alpha(\rho_2-\rho_1)$. Bounding by the supremum and infimum of each region:
    \begin{align*}
        \int_{q_\beta(\rho_1)}^{q_\beta(\rho_2)}\mu\,dF(\mu) &\leq q_\beta(\rho_2)\cdot\alpha(\rho_2-\rho_1) \leq F^{-1}(1-\beta+\alpha)\cdot\alpha(\rho_2-\rho_1),\\
        \int_{q_\alpha(\rho_1)}^{q_\alpha(\rho_2)}\mu\,dF(\mu) &\geq q_\alpha(\rho_1)\cdot\alpha(\rho_2-\rho_1) \geq F^{-1}(1-\beta-\alpha)\cdot\alpha(\rho_2-\rho_1),
    \end{align*}
    where $F^{-1}(1-\beta+\alpha)$ and $F^{-1}(1-\beta-\alpha)$ correspond to $q_{\beta}(1)$ and $q_{\alpha}(0)$, respectively. Subtracting and dividing by $\alpha$:
    $$0 \leq g(\rho_2) - g(\rho_1) \leq c_F(\rho_2-\rho_1),$$
    where $c_{F} = F^{-1}(1-\beta+\alpha) - F^{-1}(1-\beta-\alpha).$ Since $\alpha < \beta$ implies $F^{-1}(1-\beta+\alpha) < 1$, and $\beta+\alpha < 1$ implies $F^{-1}(1-\beta-\alpha) > 0$, we have $c_F < 1$. The first inequality $g(\rho_2) - g(\rho_1) \geq 0$ follows because the first integral dominates the second, establishing that $g$ is monotone increasing. 
    
    Hence $|g(\rho_1)-g(\rho_2)| \leq c_F|\rho_1-\rho_2|$ and $g$ is a contraction on the complete metric space $([0,1], |\cdot|)$. The result follows from Banach's fixed point theorem.
\end{proof}

\section{Empirical Analysis on Synthetic Data}
\label{app:simulations}

We consider four distributions governing the vulnerability risk in our synthetic population, depicted in Figure \ref{fig:risk_dists}. All four distributions share the same expected fraction of vulnerable units ($\EE[\mu]= 0.5$), so differences in targeting outcomes are driven purely by distributional shape. This allows us to study the limits of purely AI-driven allocation when no screening is employed, and to characterize the benefits of screening on allocation efficiency as aleatoric uncertainty increases. Specifically, we consider four risk distributions ordered by increasing aleatoric uncertainty. \textbf{Bimodal} ($\text{Beta}(0.1, 0.1)$): Mass concentrated near 0 and 1, with high separation between low- and high-risk units and minimal aleatoric uncertainty. \textbf{Uniform}: Flat over $[0,1]$ representing moderate uncertainty with random risk level. \textbf{Unimodal bell-shaped} ($\text{Beta}(10,10)$): Concentrated around $0.5$, where most units have intermediate risk and the signal for targeting is weak. \textbf{Point mass} ($\delta_{0.5}$): All units have identical risk $\mu = 0.5$, representing the limiting case of maximum aleatoric uncertainty in which no algorithmic targeting is possible.

\begin{figure}[h]
    \centering
    \includegraphics[width=0.6\linewidth]{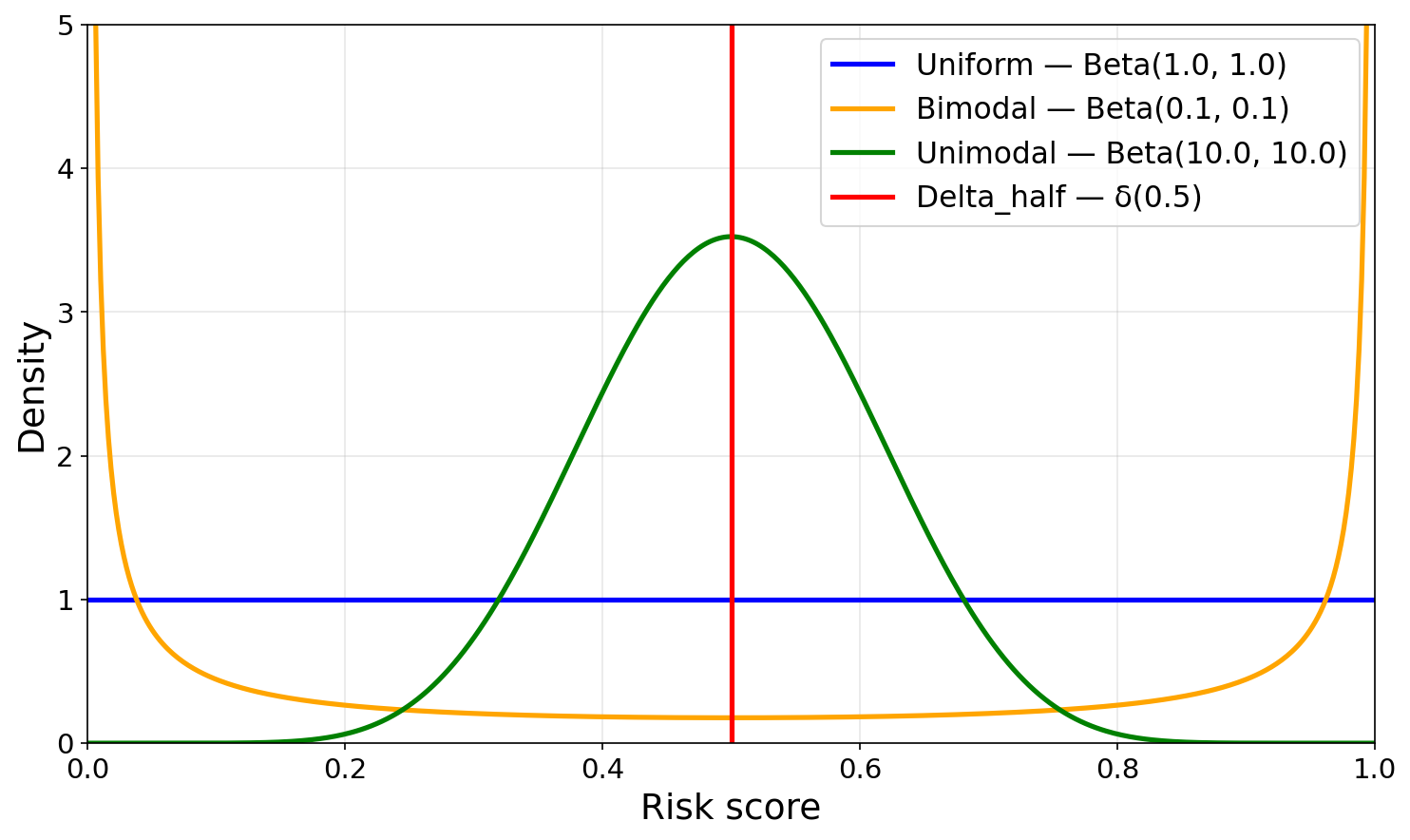}
    \caption{Risk distributions used in simulations. All four share the same mean ($\EE[\mu] = 0.5$) but differ in aleatoric uncertainty, ranging from low (bimodal, with mass concentrated near 0 and 1) to maximum ($\delta_{0.5}$, with all mass at 0.5).}
    \label{fig:risk_dists}
\end{figure}

For each distribution, we sample $N = 100{,}000$ units with true conditional risks and realized outcomes $Y_i\mid \mu_i \sim \text{Bernoulli}(\mu_i)$. We fix the resource budget at $\beta = 35\%$ and run Algorithm \ref{algo} to obtain the optimal screening set for varying screening budgets $\alpha \in [0, \beta]$. We report allocation precision, defined as the fraction of allocated units that are truly vulnerable (TP/$\beta$). Since the total number of targeted units is fixed to $\beta = 35\%$, all improvements in allocation efficiency come from better targeting through observed outcomes via screening.\footnote{All computations were performed on a Lenovo laptop equipped with an Intel Core Ultra 9 258H CPU and 32 GB of RAM.}

We compare against the optimal no-screening policy (Proposition~\ref{prop:optimal_no_screening}) in Figure~\ref{fig:sim_noscreening}. We further compare the optimal screening strategy against random screening, which selects units for physical verification uniformly at random (Figure \ref{fig:more_sim}, left), and a heuristic variant that places the screening band immediately below the purely algorithmic allocation threshold (Figure \ref{fig:more_sim} right). These comparisons illustrate that the efficiency gains from screening depend not only on the total amount of screening capacity deployed, but critically on where in the score distribution screening is concentrated.

\begin{figure}[ht]
    \centering
    \includegraphics[width=0.49\textwidth]{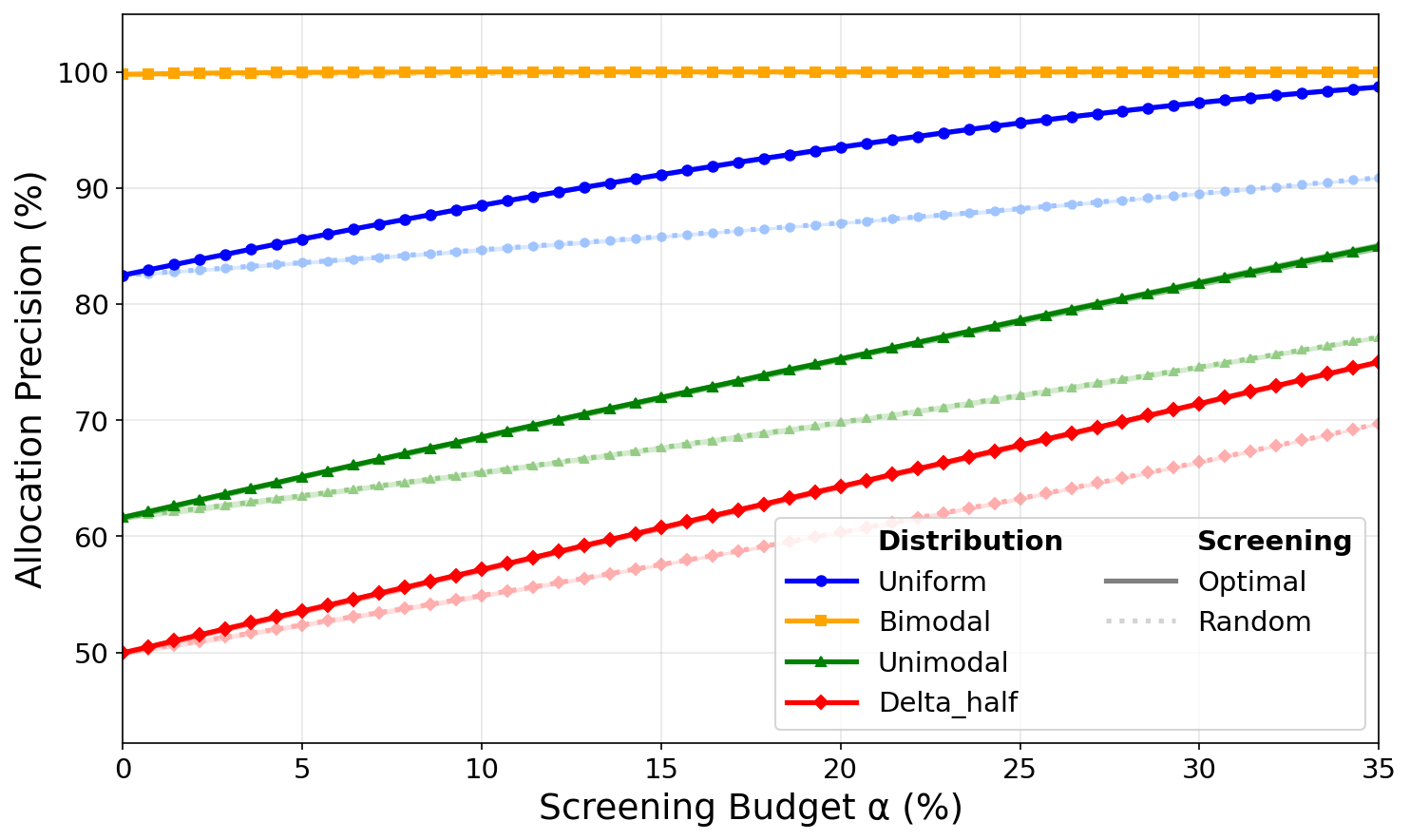}
    %\caption{Allocation precision by screening budget $\alpha$ under optimal screening vs. random screening baseline across risk distributions ($\beta = 35\%$). Lines show averages over 10 simulations; shaded regions indicate $\pm$ one standard deviation.}
\hfill
    \includegraphics[width=0.49\textwidth]{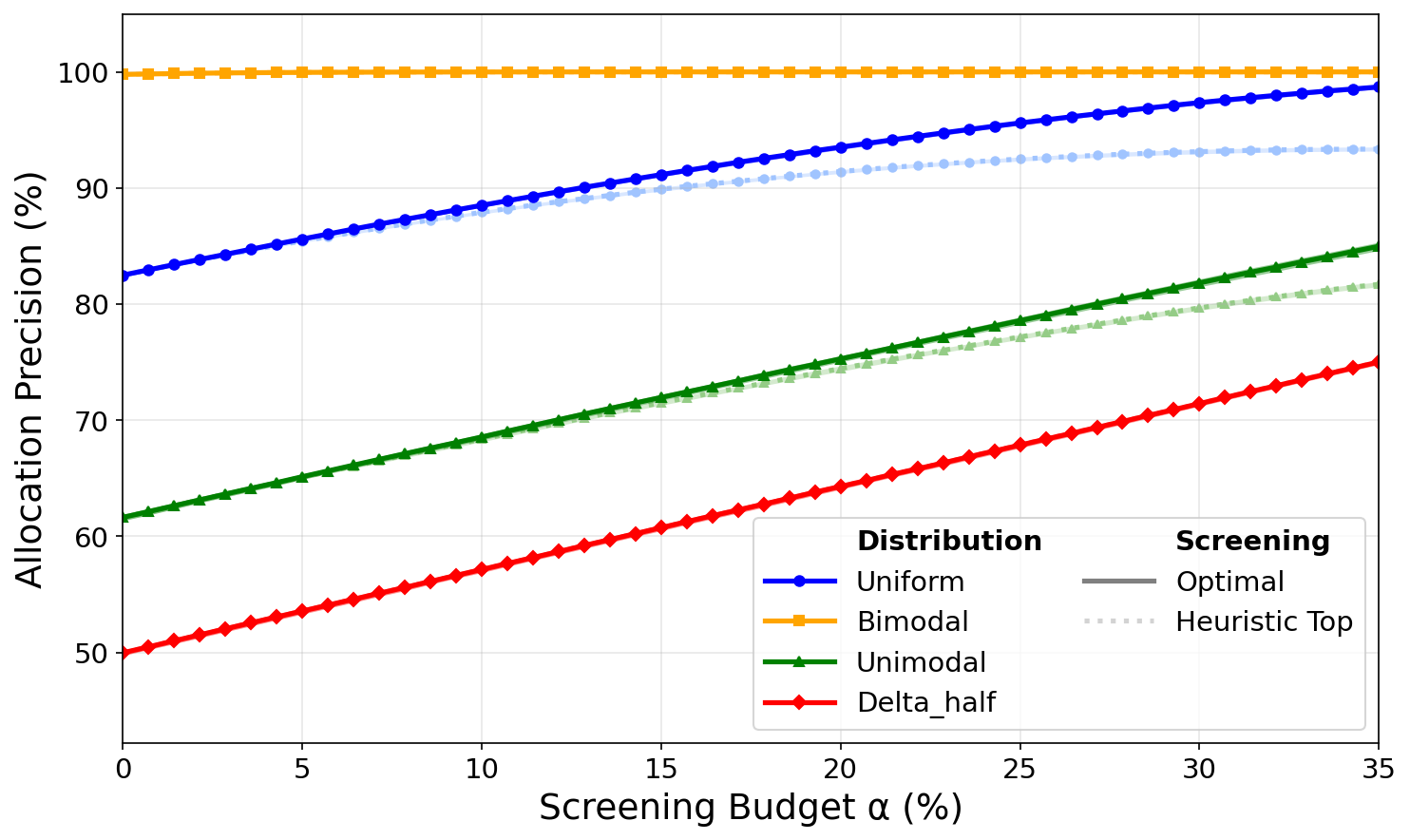}
    \caption{Allocation precision by screening budget $\alpha$ under optimal screening vs. random screening (left) and heuristic top-adjacent screening (right) across risk distributions ($\beta = 35\%$). Lines show averages over 10 simulations; shaded regions indicate $\pm$ one standard deviation.}
    \label{fig:more_sim}
\end{figure}

\end{document}